\documentclass[journal]{IEEEtran}
\usepackage{cite}
\usepackage{amsthm}
\usepackage{times}
\usepackage{epsfig}
\usepackage{graphicx}
\usepackage{amsmath}
\usepackage{amssymb}
\usepackage{algorithm}
\usepackage{algpseudocode}
\usepackage{amsmath}
\usepackage{amssymb}
\usepackage{bm}
\usepackage{booktabs}
\usepackage{color}
\usepackage{epsfig}
\usepackage{epstopdf}
\usepackage{subfigure}
\usepackage{colortbl}
\usepackage{arydshln}
\usepackage{float}
\usepackage{graphicx}
\DeclareGraphicsExtensions{.pdf,.jpeg,.png}
\usepackage{ifthen}
\usepackage{multirow}
\usepackage[square, comma, sort&compress, numbers]{natbib}
\usepackage{makecell} 
\usepackage{url}  

\usepackage{pifont}
\usepackage{flafter}
\usepackage{times}
\usepackage{threeparttable}
\usepackage{xcolor}
\usepackage{arydshln}
\definecolor{darkgreen}{rgb}{0.0, 0.2, 0.13}
\definecolor{greed}{rgb}{0.0, 0.5, 0.0}

\renewcommand{\algorithmicrequire}{\textbf{Input:}} 

\newtheorem{lemma}{\textbf{Lemma}}
\newcommand{\argmin}{\mathop{\mathrm{arg\,min}}}

\newtheorem{thm}{\textbf{Theorem}}
\newtheorem{mydef}{\textbf{Definition}}
\newcommand{\red}{\color{black}}

\hyphenation{op-tical net-works semi-conduc-tor}

\begin{document}
%
\title{Dictionary Learning with Low-rank Coding Coefficients for Tensor
Completion}
%
%
%

\author{Tai-Xiang~Jiang,
        Xi-Le~Zhao,
        Hao~Zhang,
        \and Michael~K.~Ng\IEEEauthorrefmark{1} 
        \thanks{\IEEEauthorrefmark{1} Corresponding author.}
        \thanks{This work is supported in part by the Fundamental Research Funds for the Central Universities (JBK2102001),
        in part by the National Natural Science Foundation of China (12001446, 61772003, and 61876203), in part by the HKRGC  (GRF
        12200317, 12300218, 12300519, and 17201020).}
         \thanks{T.-X. Jiang is with FinTech Innovation Center, School of Economic Information Engineering, Southwestern University of Finance and Economics, Chengdu, Sichuan, P.R.China (e-mail: taixiangjiang@gmail.com, jiangtx@swufe.edu.cn).}
        \thanks{X.-L. Zhao and H. Zhang are with Research Center for Image and Vision Computing, School of Mathematical Sciences, University of Electronic Science and Technology of China, Chengdu 611731, P.R.China (e-mail: xlzhao122003@163.com; 201821110218@std.uestc.edu.cn).}
        \thanks{M. K. Ng is with Department of Mathematics, The University of Hong Kong, Pokfulam, Hong Kong (e-mail: mng@maths.hku.hk).
}}


\maketitle

\begin{abstract}
In this paper, we propose a novel tensor learning and coding model for third-order data completion.
Our model is to learn a data-adaptive dictionary from the given observations, and determine the coding coefficients of third-order tensor tubes.
In the completion process, we minimize the low-rankness of each tensor slice containing the coding coefficients.
By comparison with the traditional pre-defined transform basis, the advantages of the proposed model are that (i) the dictionary can be learned based on the given data observations so that the basis can be more
adaptively and accurately constructed, and (ii) the low-rankness of the coding coefficients can allow
the linear combination of dictionary features more effectively.
Also we develop a multi-block proximal alternating minimization algorithm for solving
such tensor learning and coding model, and show
that the sequence generated by the algorithm can globally converge to a critical point.
Extensive experimental results for real data sets such as videos, hyperspectral images, and traffic data are reported to demonstrate these advantages and show the performance of the proposed tensor learning and coding method is significantly better than the other tensor completion methods in terms of several evaluation metrics.
\end{abstract}

\begin{IEEEkeywords}
Tensor completion, dictionary learning, tensor singular value decomposition (t-SVD), low-rank coding.
\end{IEEEkeywords}

\IEEEpeerreviewmaketitle

\section{Introduction}\label{Sec-Intro}
\IEEEPARstart{T}{ensor} completion is a problem of filling the missing or unobserved entries of the incomplete observed data, playing an important role in a wide range of real-world applications, such as color image inpainting \cite{Bertalmio2000imInpait,Komodakis2006Global,Korah2007TIP,Liu2013PAMItensor,zhao2020deep}, high-speed compressive video \cite{koller2015high}, magnetic resonance imaging (MRI) data recovery \cite{MRITV}, and hyperspectral data inpainting \cite{zhuang2018fast}. Generally, many real-world tensors are inner correlated, the spectral redundancy \cite{bioucas2013hyperspectral} of the hyperspectral images (HSIs) for example. Therefore, it is effective to utilize the global low-dimensional structure to characterize the relationship between the missing entries and observed ones.

\begin{figure}[!t]
\setlength{\tabcolsep}{3pt}
\renewcommand\arraystretch{1}
\centering
\includegraphics[width=0.999\linewidth]{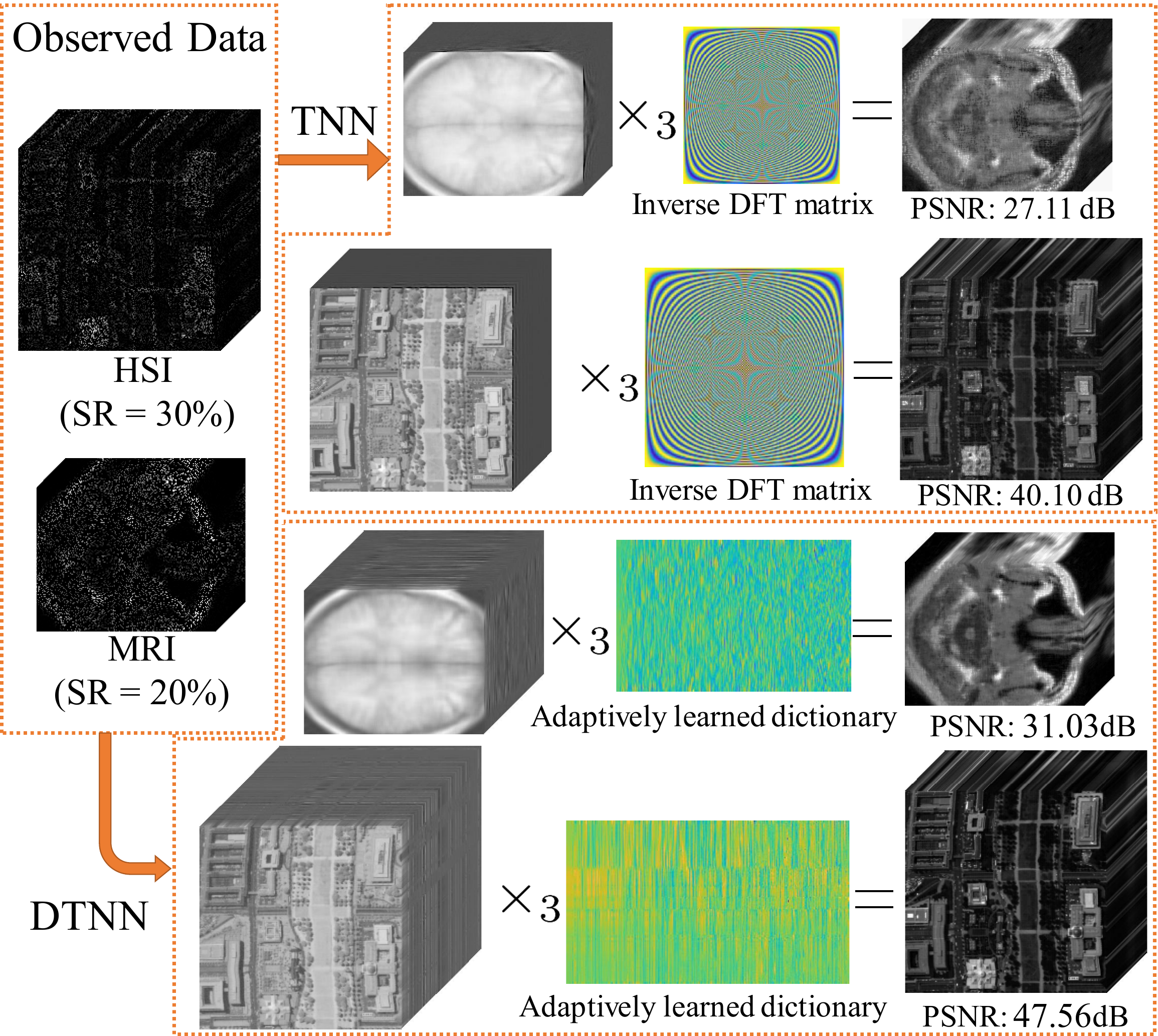}
\caption{An illustration of the TNN based LRTC and the DTNN based LRTC. SR denotes the sampling rate.}
\label{fig-flowchart}
\end{figure}

Generally, like the matrix case, the low-rank tensor completion (LRTC) can be formulated as
\begin{equation}
\min \ \text{rank}(\mathcal{X})\quad\text{s.t.} \ \mathcal{X}_\Omega=\mathcal{O}_\Omega,
\end{equation}
where $\mathcal{X}$ is the underlying tensor, $\mathcal{O}$ is the observed incomplete tensor as shown in the top-left of Fig. \ref{fig-flowchart}, $\Omega$ is the index set corresponding to the observed entries, and $\mathcal{X}_\Omega=\mathcal{O}_\Omega$ enforces the entries of $\mathcal{X}$ in $\Omega$ equal to the observation $\mathcal{O}$.
However, unlike the matrix cases, the definition of the tensor rank is still not unique and has received considerable attentions in recent researches.
{\red Generally, different definitions of the tensor rank are respectively based on different tensor decomposition schemes.
For instance, the CANDECOMP/PARAFAC (CP)-rank, based on the CP decomposition, is defined as the minimal rank-one tensors to express the original data \cite{kiers2000towards}. Although determinating the CP-rank of a given tensor is NP-hard \cite{hillar2013most}, CP decomposition have been successfully applied for tensor recovery problem \cite{zhao2015bayesian,han2018generalized,zhao2016bayesian}.
The Tucker-rank, corresponding to the Tucker decomposition \cite{tucker1966some}, is defined as a vector constituted of the ranks of the unfolding matrices along all modes. Liu {\em et al.} \cite{Liu2013PAMItensor} propose a convex surrogate of the Tucker-rank and minimize it for the LRTC problem while Zhang {\em et al.} \cite{zhang2018nonconvex} resort to use a family of nonconvex functions onto the singular values.
Another newly emerged one is the tensor train (TT)-rank derived from the TT decomposition \cite{oseledets2011tensor}. In this framework, the tensor is decomposed in a chain manner with nodes being third-order tensors. Bengua {\em et al.} \cite{bengua2017efficient} minimize a nuclear norm based on the TT-rank  for the color image and video recovery. The TT-rank has also been applied for the HSI super-resolution \cite{dian2019learning} and the tensor-on-tensor regression \cite{liu2020low}. When factors cyclically connected, it becomes the tensor ring (TR) decomposition \cite{zhao2016tensor}.
Yuan {\em et al.} \cite{yuan2019tensor} exploit the low-rank structure of the TR latent space and regularize the latent TR factors with the nuclear norm. Yu {\em et al.} \cite{yu2020low} introduce the tensor circular unfolding for the TR decomposition and perform parallel low-rank matrix factorizations to all circularly unfolded matrices for tensor completion. Please refer to \cite{long2019low,song2019tensor} for a comprehensive overview of the LRTC problem.}

This work fixes attentions on novel notions of the tensor rank, i.e., the tensor tubal-rank and multi-rank, which are derived from the tensor singular value decomposition (t-SVD) framework \cite{braman2010third, kilmer2011factorization,hao2013facial}.
{\red
The t-SVD framework is constructed based on a fundamental tensor-tensor product (t-prod) operation (see Def. \ref{Def:1}), which is closed on the set of third-order tensors and allows tensor factorizations which are analogs of matrix factorizations such as SVD. Meanwhile, it further allows new extensions of familiar matrix analysis to the multilinear setting while avoiding the loss of information inherent in matricization or flattening of the third-order tensor \cite{kilmer2013third}.}
For a third-order tensor $\mathcal{X}\in\mathbb{R}^{n_1\times n_2\times n_3}$, its t-SVD is given as $\mathcal{X} = \mathcal{U}*\mathcal{S}*\mathcal{V}^\text{H}$, {\red where $\mathcal{U}\in\mathbb{R}^{n_1\times n_1\times n_3}$ and $\mathcal{V}\in\mathbb{R}^{n_2\times n_2\times n_3}$ are orthogonal tensors, $\mathcal{S}\in\mathbb{R}^{n_1\times n_2\times n_3}$ is an f-diagonal tensor (see Def. \ref{Def:3}),} and $*$ denotes the t-prod (see Def. \ref{Def:1}). The tensor tubal-rank of $\mathcal{X}$ is defined as the number of non-zero singular tubes of $\mathcal{S}$. Since that the LRTC problem associated with the tensor tubal-rank (or multi-rank) is NP-hard, Zhang {\em et al.} \cite{zhang2014novel} turn to minimize the tensor nuclear norm (TNN, see Def. \ref{Def-tnn}), {\red which is a convex envelope of the $\ell_1$ norm of the tensor multi-rank}, and they establish the theoretical guarantee in \cite{zhang2017exact}. Jiang {\em et al.} \cite{jiang2019robust} and Wang {\em et al.} \cite{wang2019robust} tackle the robust tensor completion task, in which the incomplete observations are corrupted by sparse outliers, via minimizing TNN.
The TNN based LRTC model is given as
\begin{equation}\label{TNN-LRTC}
\min \ \|\mathcal{X}\|_\text{TNN}\quad \text{s.t.} \ \mathcal{X}_\Omega=\mathcal{O}_\Omega.
\end{equation}
{\red As the t-prod is based on a convolution-like operation,
the computation of t-prod and TNN could be implemented with the discrete Fourier transform (DFT) or the fast Fourier transform (FFT).}

In \cite{kernfeld2015tensor}, Kernfeld {\em et al.} further note that a more general tensor-tensor product could be defined with any invertible linear transforms.
Also, the TNN in \eqref{TNN-LRTC} can be alternatively constructed using other transform, e.g., the discrete cosine transform (DCT) adopted by Lu  {\em et al.}  \cite{lu2019low} and Xu {\em et al.} \cite{xu2019fast}, and the Haar wavelet transform exploited in \cite{song2019robust}. Furthermore, Jiang {\em et al.} \cite{jiang2019framelet} introduce the framelet transform, which is semi-invertible, and break through the restriction of invertibility.
%
Within these transform based TNN methods, the typical pipeline is applying one selected transform along the third dimension, and minimizing the low-rankness of slices of the transformed data for the completion.
Once the tubes of the original
tensor are highly correlated, the frontal slices of the transformed data would be low-rank \cite{jiang2019framelet,song2019robust}.

An unavoidable issue is that the correlations along the third mode are different for various types of data. For example, the redundancy of HSIs along the third mode are much higher than videos with changing scenes. Thus, predefined transforms usually lack flexibility and could not be suitable for all kinds of data. Therefore, to address this issue, we construct a dictionary, which can be adaptively inferred form the data, instead of inverse transforms mentioned above.
%
As mentioned by the authors of \cite{lu2019low,song2019robust}, it is interesting to learn the transform for implementing the t-SVD from the data in different tasks. Our approach can be viewed as learning the inverse transform from this perspective and indeed enriches the research on this topic. The methods, which utilize DFT, DCT, and Framelet, can be viewed as specific instances of our method with fixed dictionaries, i.e., the inverse discrete transformation matrices.
{From the view of dictionary learning, our method can also be interpreted as learning a dictionary with low-rank coding.
We enforce the low-rankness of the coding coefficients in a tensor manner, and this allows the linear combination of features, namely, the atoms of the dictionary.}

The main contributions of this paper mainly consist of three aspects:
\begin{itemize}
  \item We propose novel tensor learning and coding model, which is to adaptively learn a dictionary from the observations
and determine the low rank coding coefficients, for the third-order tensor completion.
  \item A multi-block proximal alternating minimization algorithm is designed to solve the proposed non-convex model. We theoretically prove its global convergence to a critical point.
  \item Extensive experiments are conducted on various types of real-world third-order tensor data. The results illustrate that our method outperforms compared LRTC methods.
\end{itemize}

This paper is organized as follows. Sec. \ref{Sec-Pre} introduces related works and the basic preliminaries. Our method is given in Sec. \ref{Sec:Model}. We report the experimental results in Sec. \ref{Sec-Exp}. Finally, Sec. \ref{Sec-Con} draws some conclusions.
\section{Preliminaries}\label{Sec-Pre}
%

Throughout this paper, lowercase letters, e.g., $x$, boldface lowercase letters, e.g., $\mathbf x$, boldface upper-case letters, e.g., $\mathbf X$, and boldface calligraphic letters,  e.g., $\mathbf{\mathcal{X}}$, are used to denote scalars, vectors, matrices, and tensors, respectively.
Given a third-order tensor $\mathbf{\mathcal{X}}\in \mathbb{R}^{n_{1}\times n_2\times n_{3}}$, we use $\mathcal X_{ijk}$ to denote its $(i,j,k)$-th element.
The $k$-th frontal slice of $\mathbf{\mathcal{X}}$ is denoted as $\mathcal{X}^{(k)}$ (or $\mathcal{X}(:,:,k)$, $\mathbf X^k$), and the mode-3 unfolding matrix of $\mathbf{\mathcal{X}}$ is denoted as $\mathbf X_{(3)}\in \mathbb{R}^{n_3\times n_1n_2}$. We use ${\tt fold}_3$ and ${\tt unfold}_3$ to denote the folding and unfolding operations along the third dimension, respectively, and we have $\mathcal{X}={\tt fold}_{3}({\tt unfold}_{3}(\mathcal{X})) = {\tt fold}_{3}(\mathbf X_{(3)})$.
The mode-3 tensor-matrix product is denoted as $\times_3$, and we have $\mathcal X\times_3\mathbf A\Leftrightarrow \mathbf A {\tt unfold}_{3}(\mathcal{X})$.
{\red The DFT matrix and inverse DFT matrix for a vector of the length $n$ are respectively denoted as $\mathbf{F}_{n}$ and $\mathbf{F}_{n}^{-1}$. For the tensor $\mathbf{\mathcal{X}}\in \mathbb{R}^{n_{1}\times n_2\times n_{3}}$,
its Fourier transformed (along the third mode) tensor $\mathcal{Z}\in\mathbb{C}^{n_1\times n_2\times n_3}$ can be obtained by $\mathcal{Z} = \mathcal{X}\times_3\mathbf{F}_{n_3}$, and we have $\mathcal{X} = \mathcal{Z}\times_3\mathbf{F}_{n_3}^{-1}$.}
The tensor Frobenius norm of a third-order tensor $\mathcal{X}$ is defined as $\left\|\mathcal{X}\right\|_{F}:=\sqrt{\langle\mathcal{X},\mathcal{X}\rangle} = \sqrt{\sum_{ijk}\mathcal X_{ijk}^2}$. {\red For a matrix $\mathbf X\in\mathbb{C}^{n_1\times n_2}$, its matrix nuclear norm is denoted as $\|\mathbf X\|_*=\sum_{i=1}^{\min\{n_1,n_2\}} \sigma_i(\mathbf{X})$, where $\sigma_i(\mathbf{X})$ is the $i$-th largest singular value of $\mathbf X$.}

\begin{mydef}[tensor conjugate transpose \cite{kilmer2013third}]
The conjugate transpose of a tensor $\mathbf{\mathcal{A}}\in \mathbb{C}^{n_{1}\times n_2\times n_{3}}$ is tensor $\mathbf{\mathcal{A}}^\text{\rm H}\in \mathbb{C}^{n_{2}\times n_1\times n_{3}}$ obtained by conjugate transposing each of the frontal slice and then reversing the order of transposed frontal slices 2 through $n_3$, i.e., $
\left(\mathbf{\mathcal{A}}^\text{\rm H}\right)^{(1)}=\left(\mathbf{\mathcal{A}}^{(1)}\right)^\text{\rm H}$ and $\left(\mathbf{\mathcal{A}}^\text{\rm H}\right)^{(i)}=\left(\mathbf{\mathcal{A}}^{(n_3+2-i)}\right)^\text{\rm H}$  for $i=2,\cdots,n_3$.
\end{mydef}


\begin{mydef}[t-prod \cite{kilmer2013third}]\label{Def:1}
The tensor-tensor-product (t-prod) $\mathbf{\mathcal{C}}=\mathbf{\mathcal{A}}*\mathbf{\mathcal{B}}$
of $\mathbf{\mathcal{A}}\in \mathbb{R}^{n_{1}\times n_2\times n_{3}}$ and $\mathbf{\mathcal{B}}\in \mathbb{R}^{n_{2}\times n_4\times n_{3}}$ is a tensor of size
$n_1\times n_4 \times n_3$, where the $(i,j)$-th tube $\mathbf{c}_{ij:}$ is given by
\begin{equation}
\mathbf{c}_{ij:} = \mathbf{\mathcal{C}}(i,j,:) = \sum_{k=1}^{n_2}\mathbf{\mathcal{A}}(i,k,:)\circledast\mathbf{\mathcal{B}}(k,j,:)
\end{equation}
where $\circledast$ denotes the circular convolution between two tubes of same size.
\label{def:tprod}
\end{mydef}{\red
Equivalently, for $\mathbf{\mathcal{C}}=\mathbf{\mathcal{A}}*\mathbf{\mathcal{B}}$, we have
\begin{equation*}
\begin{aligned}\left(
\begin{matrix}
  \mathcal{C}^{(1)}  \\
  \mathcal{C}^{(2)} \\
  \vdots    \\
  \mathcal{C}^{(n_3)}\\
\end{matrix}\right)
=
\left(
\begin{matrix}
  \mathcal{A}^{(1)} & \mathcal{A}^{(n_3)} & \cdots & \mathcal{A}^{(2)} \\
  \mathcal{A}^{(2)} & \mathcal{A}^{(1)} & \cdots & \mathcal{A}^{(3)} \\
  \vdots &  \vdots & \ddots &  \vdots \\
  \mathcal{A}^{(n_3)} & \mathcal{A}^{(n_3-1)} & \cdots & \mathcal{A}^{(1)}
\end{matrix}
\right)
\left(
\begin{matrix}
  \mathcal{B}^{(1)}  \\
  \mathcal{B}^{(2)} \\
  \vdots   \\
  \mathcal{B}^{(n_3)} \\
\end{matrix}
\right),
\end{aligned}
\end{equation*}
where the first item in the right part of the equation is also called the block circulant
unfolding of $\mathcal{A}$.}

{\red
\begin{mydef}[Face-wise product \cite{kernfeld2015tensor}]
For two third-order tensors $\mathcal{A}\in \mathbb{R}^{n_{1}\times n_2\times n_{3}}$ and $\mathcal{B}\in \mathbb{R}^{n_{2}\times n_4\times n_{3}}$, their face-wise product $\mathcal{A}\triangle\mathcal{B}\in\mathbb{R}^{n_1\times n_4\times n_3}$ is defined according to
$$
\left(\mathcal{A}\triangle\mathcal{B}\right)^{(i)} = \mathcal{A}^{(i)}\mathcal{B}^{(i)}, \ \text{for} \ i = 1,2,\cdots, n_3.
$$
\label{def:faceprod}
\end{mydef}

As the convolution operation could be converted to element-wise product via the Fourier transform, we have
\begin{equation}\label{def-t-prod-f}
\mathcal{C} =\mathcal{A}\ast\mathcal{B} =\left(\left(\mathcal{A}\times_3\mathbf F_{n_3}\right) \triangle\left(\mathcal{B} \times_3\mathbf F_{n_3}\right)\right)\times_3\mathbf{F}_{n_3}^{-1}.
\end{equation}
Eq. \eqref{def-t-prod-f} indicates that we can compute the t-prod between two tensors using the DFT matrix or the fast Fourier transform (FFT) for acceleration.
}


\begin{mydef}[special tensors \cite{kilmer2013third}]\label{Def:3}
The {\bf identity} tensor $\mathbf{\mathcal{I}}\in \mathbb{R}^{n_{1}\times n_1\times n_{3}}$ is the tensor whose first frontal slice is the $n_1\times n_1$ identity matrix, and whose other frontal slices are all
zeros.
A tensor $\mathbf{\mathcal{Q}} \in \mathbb{C}^ {n_{1} \times n_1\times n_{3}}$  is {\bf orthogonal} if it satisfies
\begin{equation}
\mathbf{\mathcal{Q}}^\text{\rm H}*\mathbf{\mathcal{Q}}=\mathbf{\mathcal{Q}}*\mathbf{\mathcal{Q}}^\text{\rm H}=\mathbf{\mathcal{I}}.
\end{equation}
A tensor $\mathbf{\mathcal{A}}$ is called {\bf f-diagonal} if each frontal slice $\mathbf{\mathcal{A}}^{(i)}$ is a diagonal matrix.
\end{mydef}

\begin{thm}[t-SVD \cite{kilmer2013third,kilmer2011factorization}]
For $\mathbf{\mathcal{A}}\in \mathbb{R}^{n_{1}\times n_2\times n_{3}}$, the t-SVD of $\mathbf{\mathcal{A}}$ is given by
\begin{equation}
\mathbf{\mathcal{A}}=\mathbf{\mathcal{U}}*\mathbf{\mathcal{S}}*\mathbf{\mathcal{V}}^\text{\rm H}
\end{equation}
where $\mathbf{\mathcal{U}}\in \mathbb{R}^{n_{1}\times n_1\times n_{3}}$ and $\mathbf{\mathcal{V}}\in \mathbb{R}^{n_{2}\times n_2\times n_{3}}$ are orthogonal tensors, and $\mathbf{\mathcal{S}}\in \mathbb{R}^{n_{1}\times n_2\times n_{3}}$ is an f-diagonal tensor.
\end{thm}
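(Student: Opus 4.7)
The plan is to lift the problem to the Fourier domain, where t-prod collapses to a face-wise product by equation \eqref{def-t-prod-f}, and reduce the construction of the t-SVD to $n_3$ independent matrix SVDs. First I would form $\tilde{\mathcal{A}} = \mathcal{A}\times_3\mathbf{F}_{n_3}$ and compute, for each frontal slice, the complex matrix SVD $\tilde{\mathcal{A}}^{(i)} = \tilde{\mathbf{U}}^{(i)}\tilde{\mathbf{S}}^{(i)}(\tilde{\mathbf{V}}^{(i)})^{\mathrm H}$ with $\tilde{\mathbf{U}}^{(i)}$ and $\tilde{\mathbf{V}}^{(i)}$ unitary and $\tilde{\mathbf{S}}^{(i)}$ real diagonal. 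Stacking these along the third mode gives tensors $\tilde{\mathcal{U}}$, $\tilde{\mathcal{S}}$, $\tilde{\mathcal{V}}$ in the Fourier domain, and I would then define $\mathcal{U} = \tilde{\mathcal{U}}\times_3\mathbf{F}_{n_3}^{-1}$, $\mathcal{S} = \tilde{\mathcal{S}}\times_3\mathbf{F}_{n_3}^{-1}$, and $\mathcal{V} = \tilde{\mathcal{V}}\times_3\mathbf{F}_{n_3}^{-1}$ as candidates.

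Next I would verify the four required properties in turn. The factorization $\mathcal{A}=\mathcal{U}*\mathcal{S}*\mathcal{V}^{\mathrm H}$ follows immediately from \eqref{def-t-prod-f} once one checks that the Fourier-domain slices of $\mathcal{V}^{\mathrm H}$ are precisely $(\tilde{\mathbf{V}}^{(i)})^{\mathrm H}$; this identity is a consequence of the paper's definition of the tensor conjugate transpose, since the slice reversal $i\mapsto n_3{+}2{-}i$ together with a per-slice Hermitian transpose is exactly what the DFT does to the conjugate transposed tube-wise signal. The orthogonality $\mathcal{U}^{\mathrm H}*\mathcal{U}=\mathcal{I}$ and $\mathcal{V}^{\mathrm H}*\mathcal{V}=\mathcal{I}$ then reduces, again via \eqref{def-t-prod-f}, to the face-wise identities $(\tilde{\mathbf{U}}^{(i)})^{\mathrm H}\tilde{\mathbf{U}}^{(i)} = \mathbf{I}$, which hold by construction, using also that the Fourier-domain slices of $\mathcal{I}$ are all the identity matrix. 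Finally, f-diagonality of $\mathcal{S}$ follows because the inverse DFT acts only along the third mode, so each spatial-domain slice $\mathcal{S}^{(k)}$ is a real linear combination of the diagonal matrices $\tilde{\mathbf{S}}^{(i)}$ and hence diagonal.

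The one delicate point, and what I expect to be the main obstacle, is ensuring that $\mathcal{U},\mathcal{S},\mathcal{V}$ are real rather than merely complex. Because $\mathcal{A}$ is real, its Fourier-domain slices satisfy the conjugate symmetry $\tilde{\mathcal{A}}^{(n_3+2-i)}=\overline{\tilde{\mathcal{A}}^{(i)}}$ for $i\ge 2$, together with $\tilde{\mathcal{A}}^{(1)}$ being real. I would therefore select the matrix SVDs not independently but in conjugate pairs: compute the SVD of $\tilde{\mathcal{A}}^{(i)}$ freely for $i\le \lceil(n_3+1)/2\rceil$ (choosing a real SVD for the self-conjugate slices) and then define $\tilde{\mathbf{U}}^{(n_3+2-i)}=\overline{\tilde{\mathbf{U}}^{(i)}}$, $\tilde{\mathbf{S}}^{(n_3+2-i)}=\tilde{\mathbf{S}}^{(i)}$, $\tilde{\mathbf{V}}^{(n_3+2-i)}=\overline{\tilde{\mathbf{V}}^{(i)}}$. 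This preserves both unitarity of each slice and the conjugate symmetry, so applying the inverse DFT along the third mode yields real tensors. With this choice all four properties above go through unchanged, completing the construction.
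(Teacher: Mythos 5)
The paper does not prove this theorem itself---it is quoted directly from the cited references \cite{kilmer2013third,kilmer2011factorization}---and your construction (Fourier transform along mode 3, slice-wise complex SVDs, conjugate-pairing of the slices to preserve the conjugate symmetry of a real signal, then inverse DFT) is precisely the argument given in those references. The proposal is correct, including the two genuinely delicate points: identifying the Fourier-domain slices of $\mathcal{V}^{\text{H}}$ with the Hermitian transposes $(\tilde{\mathbf{V}}^{(i)})^{\text{H}}$, and enforcing realness of the factors via the paired choice of SVDs.
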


{\red
\begin{mydef}[tensor tubal-rank \cite{zhang2014novel}]\label{Def:tubal}
The tensor tubal-rank of a tensor $\mathbf{\mathcal{A}}\in\mathbb{R}^{n_1\times n_2\times n_3}$, denoted as $\text{rank}_t(\mathbf{\mathcal{A}})$, is defined as the number of non-zero singular tubes in $\mathbf{\mathcal{S}}$, where $\mathbf{\mathcal{S}}$ is from the t-SVD of $\mathbf{\mathcal{A}}$: $\mathbf{\mathcal{A}}=\mathbf{\mathcal{U}}*\mathbf{\mathcal{S}}*\mathbf{\mathcal{V}}^\text{H}$. Formally, we can write
$$\text{rank}_t(\mathbf{\mathcal{A}})=\#\{i,\mathbf{\mathcal{S}}(i,i,:)\neq0\}.$$
 An alternative definition of the tensor tubal-rank is that it is
the largest rank of all the frontal slices of $\mathcal{A}\times_3\mathbf F_{n_3}$ in Fourier domain.
\end{mydef}
Suppose the tensor $\mathbf{\mathcal{A}}\in\mathbb{R}^{n_1\times n_2\times n_3}$ has tensor tubal-rank $r$, then the reduced t-SVD of $\mathcal{A}$ is given by
$\mathcal{A} = \mathbf{\mathcal{U}}*\mathbf{\mathcal{S}}*\mathbf{\mathcal{V}}^\text{\rm H},
$
where $\mathbf{\mathcal{U}}\in \mathbb{R}^{n_{1}\times r\times n_{3}}$ and $\mathbf{\mathcal{V}}\in \mathbb{R}^{r\times n_2\times n_{3}}$ are orthogonal tensors, and $\mathbf{\mathcal{S}}\in \mathbb{R}^{r\times r\times n_{3}}$ is an f-diagonal tensor.
An important property of the t-SVD is that the truncated t-SVD of a tensor provides the optimal approximation measured by the Frobenius norm with the tubal
rank at most $r$ \cite{kilmer2013third}.

\begin{mydef}[tensor multi-rank \cite{zhang2014novel}]\label{def:mrank}
Let $\mathcal{A}\in\mathbb{R}^{n_1\times n_2\times n_3}$ be a third-order tensor, the tensor multi-rank, denoted as $\text{rank}_{\text{m}}(\mathcal{A}) \in\mathbb{R}^{n_3}$, is a vector whose $i$-th element is the rank of the $i$-th frontal slice of $\mathcal{B}=\mathcal{A}\times_3\mathbf F_{n_3}$. We can write
\begin{equation}
\text{rank}_{\text{m}}(\mathcal{A}) = \left[\text{rank}\left(\mathcal{B}^{(1)}\right),\text{rank}\left(\mathcal{B}^{(2)}\right)\cdots,\text{rank}\left(\mathcal{B}^{(n_3)}\right)\right].
\end{equation}
\end{mydef}
Given a third-order tensor $\mathcal{A}\in\mathbb{R}^{n_1\times n_2\times n_3}$ we can find that its tensor tubal-rank $\text{rank}_t(\mathcal{A})$ equals to the $\ell_\infty$ norm (or say the biggest value) of the tensor multi-rank $\text{rank}_m(\mathcal{A})$. As $\|\text{rank}_m(\mathcal{X})\|_1\geq\|\text{rank}_m(\mathcal{X})\|_\infty = \text{rank}_t(\mathcal{X}) \geq\frac{1}{n_3}\|\text{rank}_m(\mathcal{X})\|_1$, the tensor tubal-rank is bounded by the $\ell_1$ norm of the tensor multi-rank.}
\begin{mydef}[block diagonal operation \cite{zhang2014novel}]\label{Def:bldg}
The block diagonal operation of $\mathbf{\mathcal{A}}\in \mathbb{C}^{n_{1}\times n_2\times n_{3}}$ is given by
\begin{equation}
\begin{aligned}
 {\tt bdiag}(\mathcal{A})\triangleq\left [
\begin{tabular}{cccc}
$\mathbf{\mathcal{A}}^{(1)}$&&&\\
&$\mathbf{\mathcal{A}}^{(2)}$ &&\\
&&$\ddots$ &\\
&&&$\mathbf{\mathcal{A}}^{(n_3)}$
\end{tabular}\right],
\end{aligned}
\end{equation}
where, and ${\tt bdiag}(\mathcal{A})\in\mathbb{C}^{n_1n_3\times n_2n_3}$.%
\end{mydef}

{\red
\begin{mydef}[tensor-nuclear-norm (TNN) \cite{zhang2014novel}]\label{Def-tnn}
The tensor nuclear norm of a tensor $\mathbf{\mathcal{A}}\in \mathbb{R}^{n_{1}\times n_2\times n_{3}}$, denoted as $\|\mathbf{\mathcal{A}}\|_{\text{\rm TNN}}$, is defined as
\begin{equation}
\begin{aligned}
\|\mathbf{\mathcal{A}}\|_{\text{TNN}}\triangleq\|{\tt bdiag}(\mathcal{A}\times \mathbf F_{n_3})\|_{*}.
\end{aligned}
\label{tnn}
\end{equation}
The TNN can be computed via the summation of the matrix nuclear norm of $(\mathcal{A}\times \mathbf F_{n_3})$'s frontal slices. That is
$\|\mathbf{\mathcal{A}}\|_{\text{TNN}}=\sum_{i=1}^{n_3}\|(\mathcal{A}\times \mathbf F_{n_3})^{(i)}\|_*$.\end{mydef}
Also, as the block circular matrix can be block diagnosed by the DFT, we have
\begin{equation*}
\begin{aligned}
\|\mathcal{A}\|_\text{TNN} = &\|{\tt bdiag}(\mathcal{A}\times \mathbf F_{n_3})\|_{*}\\
& \left\|\left(
\begin{matrix}
  \mathcal{A}^{(1)} & \mathcal{A}^{(n_3)} & \cdots & \mathcal{A}^{(2)} \\
  \mathcal{A}^{(2)} & \mathcal{A}^{(1)} & \cdots & \mathcal{A}^{(3)} \\
  \vdots &  \vdots & \ddots &  \vdots \\
  \mathcal{A}^{(n_3)} & \mathcal{A}^{(n_3-1)} & \cdots & \mathcal{A}^{(1)}
\end{matrix}
\right)\right\|_{*}.
\end{aligned}
\end{equation*}
This reveals the connection between the TNN and the circular convolution operation used for defining the t-prod.

Denoting the Fourier transformed tensor of $\mathcal{X}$ as $\mathcal{Z}=\mathcal{X}\times \mathbf F_{n_3}$, we have $\mathcal{X}=\mathcal{Z}\times_3\mathbf F_{n_3}^{-1}$ and the
the equivalent form of the TNN based LRTC model in \eqref{TNN-LRTC} as
\begin{equation}
\min_\mathcal{Z}\ \|{\tt bdiag}(\mathcal{Z})\|_*\quad \text{s.t.} \ \left(\mathcal{Z}\times_3\mathbf F_{n_3}^{-1}\right)_\Omega = \mathcal{O}_\Omega.
\label{eq-tnn}
\end{equation}}

%
%

\section{Main Results}\label{Sec:Model}
\subsection{Proposed Model}\label{Sec:Model1}
{\red As \eqref{eq-tnn} can be comprehended as to find a slice-wisely low-rank coding of $\mathcal{X}$ with a predefined the dictionary $\mathbf{F}_{n_3}^{-1}$. To promote the flexibility, we replace $\mathbf{F}_{n_3}^{-1}$ with a data-adaptive dictionary and
our} tensor learning and coding model is formulated as
\begin{equation}
\begin{aligned}
\min\limits_{\mathcal{Z},\mathbf D} \quad& \|{\tt bdiag}(\mathcal{Z})\|_*,\\
\text{s.t.}\quad& \left(\mathcal{Z}\times_3\mathbf D\right)_\Omega=\mathcal{O}_{\Omega}\\
&\|\mathbf D(:,i)\|_2= 1\ \text{for} \ i=1,2,\cdots,d,
\end{aligned}
\label{eq-dtnn}
\end{equation}
where $\mathbf D\in\mathbb{R}^{n_3\times d}$ and $\mathcal{Z}\in\mathbb{R}^{n_1\times n_2\times d}$ are respective the dictionary and the low-rank coding coefficients.
As our LRTC model is very similar to the TNN based model in \eqref{eq-tnn}, we term it as dictionary based TNN (DTNN).

{\red One the one hand, if the dictionary $\mathbf D\in\mathbb{R}^{n_3\times d}$ is prefixed and there is a matrix $\mathbf D^*\in\mathbb{R}^{d\times n_3}$ which satisfies $\mathbf D\mathbf D^* = \mathbf I_{n_3}$, the structure of the t-prod (and t-SVD) still holds when replacing $\mathbf F_{n_3}$ and $\mathbf{F}_{n_3}^{-1}$ in \eqref{def-t-prod-f} with $\mathbf D^*$ and $\mathbf D$, respectively. The circular convolution operation between tubes, which is used to define the original t-prod, will change according to how $\mathbf D$ and $\mathbf D^*$ are constructed. If $d=n_3$ and $\mathbf D\mathbf D^* = \mathbf D\mathbf D^* = \mathbf I_{n_3}$, the exact recovery of the underlying tensor is the theoretical guaranteed under certain conditions \cite{lu2019low}.
On the other hand, although the objective function of \eqref{eq-dtnn} is in the same form of \eqref{eq-tnn}, it could not be derived to a normative definition of a norm as Def. \ref{Def-tnn} if finding the $\mathbf D^*$ is difficult. Given a tensor $\mathcal{X}$ and a certain dictionary $\mathbf{D}$, the coefficients in $\mathcal{Z}$ here could not be directly obtained with satisfying $ \left(\mathcal{Z}\times_3\mathbf D\right)=\mathcal{X}$.
It is needed to optimize \eqref{eq-dtnn} simultaneously with respect to the dictionary and coefficients (with $\Omega$ indexing all the entries).
By the way, our DTNN can also be generalized for higher-order tensors via the techniques proposed in \cite{zheng2020tensor,martin2013order}, or for other applications, such as the tensor robust principal component analysis \cite{lu2019tensortpami} and remote sensing images recovery \cite{liu2021hyperspectral,yang2020remote}.}

While resembling \eqref{eq-tnn} in form, our model in \eqref{eq-dtnn} is distinct from the TNN based LRTC model.
The main {\red difference} is that our model is more flexible for different kinds of data because of the data-adaptive dictionary term.
{\red The bottom-right part in Fig. \ref{fig-flowchart} shows the coefficients and the dictionaries obtained by our method.
We can see that the dictionary learned for the HSI completion is smoother than that for the MRI data.
With the adaptively learned dictionary and corresponding low-rank coding, the performance of our method is significantly better than the TNN based LRTC method.}
The dictionary used in \eqref{eq-dtnn} can be viewed as the inverse transform. This is also different from previous works tailoring the linear or unitary transform \cite{lu2019low,song2019robust}.

{Traditional dictionary learning techniques utilize overcomplete dictionaries, the amount of whose atoms is always more than the dimension of the signal, and find the sparse representations \cite{donoho2006most}. In \eqref{eq-dtnn}, although $d$ is much bigger than $n_3$, $\mathbf D$ is still not big enough to overcompletely represent $\mathcal X$, which is of big volume, with sparse coefficients. Therefore, we need the specific low-rank structure of the coefficients, which allows the linear combination of features, together with the learned dictionary, to accurately complete $\mathcal X$.
Thus, our method is distinct from previous tensor dictionary learning methods, which enforce the sparsity of coefficients, e.g., \cite{peng2014decomposable}. Please see Sec. \ref{Sec-SvsLR} for detailed comparisons of sparsity and low-rankness.}
\subsection{Proposed Algorithm}\label{Sec:Alg}
To optimize the specific structured problem in the proposed model, we tailored a multi-block proximal alternating minimization algorithm.
Let
\begin{equation*}\centering
\Phi(\mathbf{\mathcal{X}})=\left\{
\begin{aligned}
&0,\quad       & \mathcal{X}_\Omega = \mathcal{O}_\Omega,\\
&\infty, &\text{otherwise},
\end{aligned}
\right.
\end{equation*}
and
\begin{equation*}\centering
\Psi(\mathbf D)=\left\{
\begin{aligned}
&0,\quad       & \|\mathbf D(:,i)\|_2= 1\ &\text{for} \ i=1,2,\cdots,d,\\
&\infty, &\text{otherwise}&.
\end{aligned}
\right.
\end{equation*}
Thus, the problem in (\ref{eq-dtnn}) can be rewritten as the following unconstraint problem
\begin{equation}
\min\limits_{\mathcal{Z},\mathbf D} \quad\Phi\left(
\mathcal{Z}\times_3\mathbf D\right)+\sum\limits_{{\red i}=1}^{d}\|\mathcal{Z}^{({\red i})}\|_{*}+\Psi(\mathbf D)
\label{A_un}
\end{equation}

As the minimization problem in \eqref{A_un} is difficult to be directly optimized. Therefore, we resort to the half quadratic splitting (HQS) technique \cite{geman1992constrained,nikolova2005analysis} and turn to solve the following problem
\begin{equation}
\min\limits_{\mathcal{Z},\mathbf D, \mathcal{X}}  \frac{\beta}{2}\|\mathcal{X}\hspace{-0.4mm}-\hspace{-0.4mm}
\mathcal{Z}\hspace{-0.4mm}\times_3\hspace{-0.4mm}\mathbf D\|_F^2
\hspace{-0.4mm}+\hspace{-0.4mm}\Phi\left(\mathcal{X}\right)\hspace{-0.4mm}+\hspace{-0.4mm}\sum\limits^{d}_{\red i=1}\|\mathcal{Z}^{({\red i})}\|_{*}\hspace{-0.4mm}+\hspace{-0.4mm}\Psi(\mathbf D).
\label{Obj}
\end{equation}
We denote the objective function in \eqref{Obj} as $L(\mathcal{Z},\mathbf D, \mathcal{X})$.
{\red The optimization problem in \eqref{Obj} is non-convex and has more than two blocks. Thus, it prevent us from directly using some classical algorithms designed for convex optimizations, such as the alternating direction method of multipliers (ADMM) \cite{boyd2011distributed} utilized in \cite{zhang2017exact}, with theoretical convergence guarantees. We employ the proximal alternating minimization framework \cite{PAM2014} for this nonconvex problem with guaranteed convergence. In our algorithm, each variable is alternatively updated as:}
\begin{equation}
\begin{aligned}
  \mathcal{Z}_{k+1} &\hspace{-0.4mm}\in\hspace{-0.4mm} \arg\hspace{-0.4mm}\min\limits_\mathcal{Z}\hspace{-0.7mm}\left\{  \hspace{-0.4mm}L(\mathcal{Z},\mathbf D_{k}, \mathcal{X}_{k}) \hspace{-0.7mm}+\hspace{-0.7mm}\frac{\rho^z_k}{2}\|\mathcal{Z}-\mathcal{Z}_k\|_F^2 \hspace{-0.5mm}\right\}\hspace{-0.5mm}, \\
  \mathbf{D}_{k+1} &\hspace{-0.4mm}\in\hspace{-0.4mm} \arg\hspace{-0.4mm}\min\limits_\mathbf{D}\hspace{-0.7mm}\left\{  \hspace{-0.4mm}L(\mathcal{Z}_{k+1},\mathbf{D}, \mathcal{X}_{k}) \hspace{-0.7mm}+\hspace{-0.7mm}\frac{\rho^d_k}{2}\|\mathbf{D}-\mathbf{D}_{k}\|_F^2 \hspace{-0.5mm}\right\}\hspace{-0.5mm}, \\
  \mathcal{X}_{k+1} &\hspace{-0.4mm}\in\hspace{-0.4mm} \arg\hspace{-0.4mm}\min\limits_\mathcal{X}\hspace{-0.7mm}\left\{\hspace{-0.4mm}L(\mathcal{Z}_{k+1},\mathbf D_{k+1}, \mathcal{X}) \hspace{-0.7mm}+\hspace{-0.7mm}\frac{\rho^x_k}{2}\hspace{-0.4mm}\|\mathcal{X}\hspace{-0.5mm}-\hspace{-0.5mm}\mathcal{X}_k\|_F^2 \hspace{-0.5mm}\right\}\hspace{-0.5mm}, \\
\end{aligned}
\label{updating}
\end{equation}
where $\left(\rho^z_k\right)_{k\in\mathbb{N}}$, $\left(\rho^d_k\right)_{k\in\mathbb{N}}$, and $\left(\rho^x_k\right)_{k\in\mathbb{N}}$ are three positive sequences{\red, and
$\mathcal{Z}_k$, $\mathbf D_k$, and $\mathcal{Z}_k$ respectively denote the values of $\mathcal{Z}$, $\mathbf D$, and $\mathcal{Z}$ at the $k$-th iteration. Thus, for example, $L(\mathcal{Z},\mathbf D_{k}, \mathcal{X}_{k})$ is a function of $\mathcal{Z}$, which comes from $L(\mathcal{Z},\mathbf D, \mathcal{X})$ by fixing other two variables $\mathcal{X}$ and $\mathbf D$ as  $\mathcal{X}_{k}$  and $\mathbf D_{k}$, respectively.}\\

\subsubsection{Updating $\mathcal{Z}$ and $\mathbf D$}
Following the updating strategy in \cite{bao2015dictionary}, the coefficient $\mathcal{Z}$ (or equivalently denoted as $\mathbf Z_{(3)}$ for simplification) and the dictionary $\mathbf D$ {\red at the $k$-th iteration} can be respectively decomposed as followings:
\begin{equation}\red
\begin{aligned}\label{zdecompose}
{\mathbf{Z}_k}_{(3)} \hspace{-1mm}=\hspace{-1mm} \begin{bmatrix}
		{\mathbf z^1_k}^\top\\
\vdots\\
		{\mathbf z^i_k}^\top\\
        \vdots\\
		{\mathbf z^d_k}^\top\\
	\end{bmatrix}\hspace{-1mm}=\hspace{-1mm}
\begin{bmatrix}
		{\tt vec}(\mathbf Z^1_k)^\top\\
\vdots\\
		{\tt vec}(\mathbf Z^i_k)^\top\\
        \vdots\\
	{\tt vec}(\mathbf Z^d_k)^\top\\
	\end{bmatrix}\hspace{-1mm}=\hspace{-1mm}
\begin{bmatrix}
		{\tt vec}(\mathcal{Z}_k(:,:,1))^\top  \\
\vdots\\
		{\tt vec}(\mathcal{Z}_k(:,:,i))^\top  \\
        \vdots\\
		{\tt vec}(\mathcal{Z}_k(:,:,d))^\top  \\
	\end{bmatrix}
  \end{aligned}
\end{equation}%
and
\begin{equation}\label{ddecompose}
\begin{aligned}
\mathbf{D}_k        =  & \left[\mathbf d^1_{k},\cdots,\mathbf d^{i}_{k},\cdots,\mathbf d^{d}_{k}\right],\\
  \end{aligned}
\end{equation}
where {\red $\mathbf{Z}^i_k = \mathcal{Z}_k(:,:,i)$ indicates the $i$-th frontal slice of the coefficients tensor $\mathcal{Z}$ at the $k$-th iteration, $\mathbf z^i_k = {\tt vec}(\mathbf Z^i_k)$,} {\red ${\tt vec}(\cdot)$ denotes the vectorization operation}, and {\red $\mathbf d^{i}_k = \mathbf D_k(:,i)$ is the $i$-th atom of $\mathbf D_k$.}
{\red In our algorithm, the frontal slices of $\mathcal{Z}$ are frequently reshaped into vectors and  vice versa. Therefore, we use ${\tt vec}(\cdot)$ to denote the vectorization from the frontal slices of $\mathcal{Z}$ to a column vector by stacking the columns of $\mathcal{Z}$, and ${\tt vec}(\cdot)^{-1}$ to denote the inverse operation.}

Thus, the $\mathcal{Z}$ subproblem and the $\mathbf D$ subproblem can be respectively {\red split} into $d$ problems. Then, we update the pair of {\red$\mathbf Z^i_{k+1}$} and {\red$\mathbf d^{i}_{k+1}$} from $i = 1$ to $d$. This updating scheme is the same as the well-known KSVD technique \cite{aharon2006k}.
{\red From the decompositions in Eqs. \eqref{zdecompose} and \eqref{ddecompose}, at the beginning of the $k$-th iteartion, we can rewrite the first term in the objective function as $\frac{\beta}{2}\|\mathcal{X}_k-
\mathcal{Z}_k\times_3\mathbf D_k\|_F^2= \frac{\beta}{2}\|{\mathbf X_k}_{(3)}-\mathbf{D}_k {\mathbf{Z}_k}_{(3)}\|_F^2= \frac{\beta}{2}\|{\mathbf X_k}_{(3)}-\sum_{i=1}^d \mathbf d^{i}_{k}{\mathbf z^i_k}^\top\|_F^2$. Thus, for simplicity, we introduce an intermediate variable as
\begin{equation}\mathbf R^{i}_{k} =  {\mathbf X_k}_{(3)}-\sum_{j=1}^{i-1} \mathbf d^{j}_{k+1}(\mathbf z^j_{k+1})^\top-\sum_{j=i+1}^d \mathbf d^{j}_{k}(\mathbf z^j_k)^\top \end{equation}}
{\red Then, we solve following problems:
\begin{equation}
\label{updatingz0}
\begin{aligned}
\mathbf Z^i_{k+1}=\argmin\limits_{\mathbf{Z}}\quad&\frac{\beta}{2}\|\mathbf{R}_{k}^{i}-\mathbf d^{i}_{k}{{\tt vec}(\mathbf Z)}^\top\|_F^2
  +\|\mathbf Z\|_{*} \\
  &+\frac{\rho^z_k}{2}\|\mathbf Z-\mathbf Z_k^i\|_F^2,
  \end{aligned}
\end{equation}
and
\begin{equation}
\label{updatingd}
\begin{aligned}
\mathbf d^i_{k+1} = \arg\min\limits_{\mathbf{d}}\quad&\frac{\beta}{2}\|\mathbf{R}_{k}^{i}-\mathbf d(\mathbf z^i_k)^\top\|_F^2+\Psi(\mathbf d) \\&+\frac{\rho^d_k}{2}\|\mathbf d-\mathbf d_k^i\|_F^2.
  \end{aligned}
\end{equation}


After denoting $\mathbf{z}={\tt vec}(\mathbf Z)$, two quadratic terms in \eqref{updatingz0} can be combined as
\begin{equation*}\label{deriveZ}
\begin{aligned}
\frac{\beta}{2}&\|\mathbf{R}_{k}^{i}-\mathbf d^{i}_{k}\mathbf{z}^\top\|_F^2+\frac{\rho^z_k}{2}\|\mathbf Z-\mathbf Z_k^i\|_F^2\\
=&\frac{\beta}{2}\left(\langle\mathbf{R}_{k}^{i},\mathbf{R}_{k}^{i}\rangle -2\langle\mathbf{R}_{k}^{i},\mathbf d^{i}_{k}\mathbf{z}^\top\rangle +\langle\mathbf d^{i}_{k}\mathbf{z}^\top,\mathbf d^{i}_{k}\mathbf{z}^\top\rangle\right)\\
&+\frac{\rho^z_k}{2}\left(\langle\mathbf Z,\mathbf Z\rangle-2\langle\mathbf Z,\mathbf Z_k^i\rangle+\langle\mathbf Z_k^i,\mathbf Z_k^i\rangle\right)\\
=&\frac{\beta}{2} \left(\langle\mathbf{R}_{k}^{i},\mathbf{R}_{k}^{i}\rangle -2\langle{\tt vec}^{-1}((\mathbf{R}_{k}^{i})^\top\mathbf d^{i}_{k}),\mathbf{z}\rangle+\langle\mathbf{z},\mathbf{z}\rangle\right)\\
&+\frac{\rho^z_k}{2}\left(\langle\mathbf Z,\mathbf Z\rangle-2\langle\mathbf Z,\mathbf Z_k^i\rangle+\langle\mathbf Z_k^i,\mathbf Z_k^i\rangle\right)\\
=&\frac{\beta+\rho_k^z}{2}\|\mathbf Z-\frac{\rho_k^z\mathbf Z_k^i+\beta{\tt vec}^{-1}((\mathbf{R}_{k}^{i})^\top\mathbf d^{i}_{k})}{\beta+\rho_k^z}\|_F^2+\frac{\rho^z_k}{2}\|\mathbf Z_k^i\|_F^2\\
&-\frac{1}{2(\beta+\rho_k^z)}\|\rho_k^z\mathbf Z_k^i+\beta{\tt vec}^{-1}((\mathbf{R}_{k}^{i})^\top\mathbf d^{i}_{k})\|_F^2+\frac{\beta}{2}\|\mathbf{R}_{k}^{i}\|_F^2.
  \end{aligned}
\end{equation*}
Therefore, leaving terms independent of $\mathbf Z$ and adding the nuclear norm term, the minimization problem in \eqref{updatingz0} is equivalent to:
\begin{equation}
\label{updatingz2}
\begin{aligned}
 \mathbf{Z}_{k+1}^i \in \arg\min\limits_\mathbf{Z}\quad\|\mathbf Z\|_{*}+\frac{\beta+\rho_k^z}{2}\|\mathbf Z-\mathbf M^i_k\|_F^2,
  \end{aligned}
\end{equation}
where $\mathbf M^i_k = \frac{\rho_k^z\mathbf Z_k^i+\beta{\tt vec}^{-1}({(\mathbf{R}_{k}^{i})}^\top\mathbf d^{i}_{k})}{\beta+\rho_k^z}$.
Then, we can directly derive the closed form solution of \eqref{updatingz2} with the singular value thresholding (SVT) operator \cite{cai2010singular} as
\begin{equation}
\label{updatingz}
\begin{aligned}
  \mathbf{Z}_{k+1}^i={\tt SVT}_\frac{1}{\beta+\rho_k^z} \left(\mathbf M^{i}_k\right)\triangleq\mathbf U\left(\mathbf S-\frac{1}{\beta+\rho_k^z}\right)_+\mathbf V^\top,
  \end{aligned}
\end{equation}
where $(\mathbf U,\mathbf S,\mathbf V)$ comes from the SVD of $\mathbf M^i_k$, $\mathbf S$ is a diagonal matrix with $\mathbf M^{k,i}$'s singular values, and $(\cdot)_+$ means keeping the positive values and setting the negative values as 0.

%

Similarly, we can obtain the closed form solution of \eqref{updatingd} as following:
\begin{equation}\label{D12}
\begin{aligned}
\mathbf{d}^i_{k+1} = \frac{\beta\mathbf R^{i}_k{\tt vec}(\mathbf{Z}_{k}^i)+\rho_k^d\mathbf d_{i}^k}{\|\beta\mathbf R^{i}_k{\tt vec}(\mathbf{Z}_{k}^i)+\rho_k^d\mathbf d_{i}^k\|_2}.
  \end{aligned}
\end{equation}}
Afterwards, we obtain $\mathcal{Z}_{k+1}$ with its $i$-th frontal slice equaling to $\mathbf{Z}_{k+1}^i$ and $\mathbf{D}_{k+1} = \left[\mathbf d^1_{k+1},\cdots,\mathbf d^{i}_{k+1},\cdots,\mathbf d^{d}_{k+1}\right]$.\\

\subsubsection{Updating $\mathcal{X}$} We update $\mathcal{X}$ via solving the following minimization problem:
\begin{equation*}
\label{updatingX}
\begin{aligned}
\min\limits_\mathcal{X}\frac{\beta}{2}\|\mathcal{X}-\mathcal{Z}_{k+1}\times_3\mathbf D_{k+1}\|_F^2+\Phi\left(\mathcal{X}\right) +\frac{\rho^x_k}{2}\|\mathcal{X}-\mathcal{X}_k\|_F^2.
  \end{aligned}
\end{equation*}
$\mathcal{X}_{k+1}$ is updated via the following steps:
\begin{equation}
\label{X1}
\left\{
\begin{aligned}
  \mathcal{X}_{k+\frac{1}{2}} =&\frac{\beta{\tt fold}_3(\mathbf{D}_{k+1}{\mathbf{Z}_{k+1}}_{(3)})+\rho^x_k\mathcal{X}_k}{\beta+ \rho^x_k},  \\
  \mathcal{X}_{k+1} =&\left( \mathcal{X}_{k+\frac{1}{2}}\right)_{\Omega^C}+\mathcal{O}_{\Omega}.
  \end{aligned}\right.
\end{equation}
{\red where $\Omega_C$ denotes the complementary set of the $\Omega$. }
Finally, the pseudocode  is summarized in Algorithm \ref{alg}. The computation complexity of our algorithm at each iteration is O$(dn_1n_2(dn_3+\min(n_1,n_2)+n_3)$, given an input with size $n_1\times n_2\times n_3$.

\begin{algorithm}[htp]
\renewcommand\arraystretch{0.8}
\caption[Caption for LOF]{Proximal alternating minimization algorithm for solving \eqref{Obj}}
\begin{algorithmic}[1]
\renewcommand{\algorithmicrequire}{\textbf{Input:}} 
\Require
The observed tensor $\mathcal{O}\in\mathbb{R}^{n_1\times n_2\times n_3}$; the set of observed entries $\Omega$.
\renewcommand{\algorithmicrequire}{\textbf{Initialization:}} 
\Require $\mathcal{X}^{(0)}$, $\mathbf D^0$, and $\mathbf Z^0$;
\While {not converged}
\For {$i$ = 1 to $d$}
\State Update $\mathcal{Z}^{k+1}(:,:,i)$ via Eq. \eqref{updatingz2};
\State Update $\mathbf D^{k+1}(:,i)$ via \eqref{D12};
\EndFor
\State Update $\mathcal{X}^k$  (\ref{X1}).
\EndWhile
\renewcommand{\algorithmicrequire}{\textbf{Output:}}
\Require The reconstructed tensor $\mathcal{X}$.
\end{algorithmic}
\label{alg}
\end{algorithm}

\subsection{Convergency analysis}
In this part, we are really to establish the theoretical guarantee of convergence on our algorithm.
{\red
For convenience, we first define the following formularies,
\begin{equation}
\nonumber{}
\begin{aligned}
F(\mathcal{Z}) =\ &\sum\limits_{k=1}^{d}\|\mathcal{Z}^{(k)}\|_{*} = \|{\tt bdiag}(\mathcal{Z})\|_*,\\
\delta_{\mathcal{X}}(\mathcal{X})=\ &\Phi\left(\mathcal{X}\right),\\
\delta_{\mathcal{D}}(\mathbf D)=\ &\Psi(\mathbf D),\\
Q(\mathcal{Z},\mathbf D,\mathcal{X})=\ &\frac{\beta}{2}\|\mathcal{X}-
\mathcal{Z}\times_3\mathbf D\|_F^2,\\
L(\mathcal{Z},\mathbf D, \mathcal{X}) =\ &F(\mathcal{Z}) + \delta_{\mathcal{X}}(\mathcal{X})+\delta_{\mathcal{D}}(\mathbf D)+Q(\mathcal{Z},\mathbf D,\mathcal{X}),
\end{aligned}
\end{equation}
and
\begin{equation}\left\{
\begin{aligned}\label{updatingDZX}
  \mathcal{Z}_{k+1}= &\arg\min\limits_\mathcal{Z}\{  M_1 (\mathcal{Z}|{\red\mathcal{Z}_k}):=F(\mathcal{Z})
  \\& +Q(\mathcal{Z},\mathbf D_{k}, \mathcal{X}_{k}) +\frac{\rho^z_k}{2}\|\mathcal{Z}-\mathcal{Z}_k\|_F^2 \}, \\
  \mathbf{D}_{k+1}= &\arg\min\limits_\mathbf{D}\{ M_2 (\mathbf D|{\red\mathbf{D}_k)}:= \delta_{\mathcal{X}}(\mathcal{X})
  \\& +Q(\mathcal{Z}_{k+1},\mathbf{D}, \mathcal{X}_{k}) +\frac{\rho^d_k}{2}\|\mathbf{D}-\mathbf{D}_{k}\|_F^2 \}, \\
  \mathcal{X}_{k+1}= &\arg\min\limits_\mathcal{X}\{ M_3 (\mathcal{X}|{\red\mathcal{X}_k}):= \delta_{\mathcal{D}}(\mathbf D)
  \\&+Q(\mathcal{Z}_{k+1},\mathbf D_{k+1},\mathcal{X}) +\frac{\rho^x_k}{2}\|\mathcal{X}-\mathcal{X}_k\|_F^2 \}. \\
\end{aligned}\right.
\end{equation}
Next, we give the theorem of the global convergency of the sequence generated by \eqref{updatingDZX} as follows.
\begin{thm}\label{th-1}
The sequence generated by \eqref{updatingDZX} is bounded, and it converges to a critical point of $L(\mathcal{Z},\mathbf D,\mathcal{X})$.
\end{thm}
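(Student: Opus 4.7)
The plan is to invoke the general convergence theorem for proximal alternating minimization schemes on non-convex, non-smooth objectives satisfying the Kurdyka–{\L}ojasiewicz (KL) property, as developed in \cite{PAM2014}. Under that umbrella, convergence of the whole iterate sequence $W_k:=(\mathcal{Z}_k,\mathbf{D}_k,\mathcal{X}_k)$ to a critical point of $L$ reduces to verifying four standard ingredients: (H1) a sufficient descent of $L$ along the iterations, (H2) a relative-error subgradient bound at the new iterate, (H3) boundedness of the generated sequence, and (H4) the KL property of $L$ at every accumulation point.

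For (H1), I would exploit the defining optimality of each block in \eqref{updatingDZX}: since $\mathcal{Z}_{k+1}$ minimizes $M_1(\cdot|\mathcal{Z}_k)$, testing at $\mathcal{Z}=\mathcal{Z}_k$ gives $L(\mathcal{Z}_{k+1},\mathbf{D}_k,\mathcal{X}_k)+\tfrac{\rho_k^z}{2}\|\mathcal{Z}_{k+1}-\mathcal{Z}_k\|_F^2 \le L(\mathcal{Z}_k,\mathbf{D}_k,\mathcal{X}_k)$, and similarly for the $\mathbf{D}$- and $\mathcal{X}$-updates. Assuming $\rho_k^z,\rho_k^d,\rho_k^x \in [\underline{\rho},\bar{\rho}]$ for some $0<\underline{\rho}\le\bar{\rho}<\infty$ and telescoping the three inequalities yields
\[
L(W_{k+1}) + \tfrac{\underline{\rho}}{2}\|W_{k+1}-W_k\|_F^2 \;\le\; L(W_k),
\]
which together with $L\ge 0$ gives summability of the squared increments and monotone convergence of $L(W_k)$. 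Descent also controls each summand of $L$ uniformly in $k$, giving (H3): the nuclear-norm sum dominates $\|\mathcal{Z}_k\|_F$ since $\|X\|_*\ge\|X\|_F$ for any matrix, the indicator $\delta_{\mathcal{D}}$ forces $\mathbf{D}_k$ to have unit-norm columns, and the indicator $\delta_{\mathcal{X}}$ together with a bounded $Q(W_k)$ bounds $\|\mathcal{X}_k\|_F$.

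For (H2), writing the first-order conditions of each block subproblem produces an explicit subgradient triple at $W_{k+1}$ whose $\mathcal{Z}$-component has the form $\nabla_{\mathcal{Z}} Q(W_{k+1})-\nabla_{\mathcal{Z}} Q(\mathcal{Z}_{k+1},\mathbf{D}_k,\mathcal{X}_k)-\rho_k^z(\mathcal{Z}_{k+1}-\mathcal{Z}_k)$, with analogous expressions for the $\mathbf{D}$- and $\mathcal{X}$-blocks. Since $Q$ is polynomial and, by (H3), the iterates live in a bounded set, $\nabla Q$ is uniformly Lipschitz there, so $\|\xi_{k+1}\|_F \le C\|W_{k+1}-W_k\|_F$ for some constant $C$ depending only on that set. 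For (H4), each summand of $L$ is semi-algebraic: the nuclear norm is semi-algebraic because singular values are algebraic functions of the entries, the affine observation indicator $\delta_{\mathcal{X}}$ is semi-algebraic, the unit-sphere-product indicator $\delta_{\mathcal{D}}$ is a finite intersection of polynomial level-sets, and $Q$ is polynomial; hence $L$ is semi-algebraic and KL at every point. Plugging (H1)–(H4) into the standard PAM argument of \cite{PAM2014} yields $\sum_k \|W_{k+1}-W_k\|_F<\infty$, so $W_k$ is Cauchy and converges to some $W^\star$ which, by passing to the limit in the subgradient bound, is a critical point of $L$.

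The main technical obstacle is twofold: ensuring the abstract theorem applies, which requires a uniform lower bound $\underline{\rho}>0$ on the three proximal sequences (so the inner-loop updates in Algorithm~\ref{alg} must be implemented with non-vanishing penalty); and handling the nonsmooth pieces — the nuclear norm and the sphere constraint — when writing the limiting subdifferential, which forces one to invoke the sum rule for the semi-algebraic (and hence regular) decomposition of $L$ and to check that the relative-error estimate survives the sphere constraint through a single constant $C$. The remaining manipulations, in particular deriving the explicit subgradient from the closed-form SVT step \eqref{updatingz} and the unit-norm rescaling \eqref{D12}, are routine once these points are secured.
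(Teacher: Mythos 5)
Your proposal is correct and follows essentially the same route as the paper: verifying the sufficient decrease, relative error, boundedness, and Kurdyka--{\L}ojasiewicz conditions (via semi-algebraicity of each summand of $L$) and then invoking the general convergence theorem for proximal alternating minimization. Your additional remarks on uniformly bounding the proximal parameters and using the local Lipschitz continuity of $\nabla Q$ on the bounded iterate set are sensible refinements of the same argument rather than a different approach.
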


As the process of updating in \eqref{updatingDZX} is factually a special instance of the algorithm 4 described in \cite{PAMsequrence},
the proof of Theorem \ref{th-1} confirms to Theorem 6.2 of \cite{PAMsequrence} if satisfying the following conditions:
$$
\left\{
\begin{aligned}
\text{i)}&\ \text{the K-\L property of $L$ at each point,}\\
\text{ii)}&\ \text{the sufficient decrease condition ((64) in \cite{PAMsequrence}),}\\
\text{iii)}&\ \text{the relative error condition ((65)-(66) in \cite{PAMsequrence}).}
\end{aligned}
\right.
$$
The road map of the proof also follows this line.
Before verifying these conditions, we first give some basic definitions from variational analysis \cite{rockafellar2009variational,clarke2008nonsmooth}.
If $f:\mathbb{R}^n\rightarrow\mathbb{R}\cup \{+\infty\}$ is a real-extended-valued function, its domain is given by $\text{dom}f:=\{x\in\mathbb{R}^{n}:f(x)<+\infty \}$.
For each $x\in\text{dom}f$, the {\it Fr\'{e}chet subdifferential} of $f$ at $x$, written $\hat{\partial}f(x)$, is the set of vectors $x^*\in\mathbb{R}^n$ which satisfy
$$
\liminf\limits_{y\neq x,y\rightarrow x}\frac{1}{\|x-y\|}[f(y)-f(x)-\langle x^*,y-x\rangle\geq0.
$$
When $x\notin \text{dom}f$, we set $\hat{\partial}f(x)=\emptyset$.
Then, the subdifferential ({\it limiting-subdifferential} \cite{rockafellar2009variational}) of $f$ at $x\in\text{dom}f$, written $\partial f(x)$, is a set defined as
$$
\left\{x^*\in\mathbb{R}^n:\exists x_n\rightarrow x, f(x_n)\rightarrow f(x), x_n^*\in\hat{\partial}f(x_n)\rightarrow x^*\right\}.
$$
The (limiting-) subdifferential is more stable than the {Fr\'{e}chet subdifferential} in an algorithmic context which involves limiting processes.
A necessary (but not sufficient) condition for $x\in\mathbb{R}^n$ to be a minimizer of $f$ is
$\partial f(x)\ni 0$.
A point that satisfies $\partial f(x)\ni 0$ is called limiting-critical or simply critical.
If $K$ is a subset of $\mathbb{R}^n$ and $x$ is any point in $\mathbb{R}^n$, we set
$$
\text{dist}(x,K) = \inf\{\|x-z\|:z\in K\}.
$$
If $K$ is empty, we have $\text{dist}(x,K)=+\infty$ for all $x\in\mathbb{R}^n$. For any real-extended-valued function $f$ on $\mathbb{R}^n$, we have $\text{dist}(0,\partial f(x)) = \inf\{\|x^*\|:x^*\in\partial f(x)\}$ \cite{PAM}.
Let $f:\mathbb{R}^n\rightarrow\mathbb{R}\cup\{+\infty\}$ be a proper lower semicontinuous function. For $-\infty<\eta_1<\eta_2\leq+\infty$, we set
$$
[\eta_1<f<\eta_2] = \{x\in\mathbb{R}^n:\eta_1<f(x)<\eta_2\}.
$$
Then, we can define K-\L\ functions and semi-algebraic functions.}
\begin{mydef} (Kurdyka-\L ojasiewicz property \cite{PAMsequrence})
A proper lower semi-continuous function $f: \mathbb{R}^{n} \rightarrow \mathbb{R} \cup\{+\infty\}$ is
said to have the K-\L\ property at $\bar{x} \in \textrm{dom}(\partial f)$ if there exist $\eta \in (0,+\infty]$, a neighborhood $U$ of $\bar{x}$ and a continuous concave function $\phi:[0,\eta)\rightarrow[0,+\infty]$, which satisfies $\phi(0)=0$, $\phi$ is $C^1$ on $(0,\eta)$, and $\phi(s)>0, \forall s\in (0,\eta))$ such that for each $x \in U\cap[f(\bar{x})<f<f(\bar{x})+\eta]$  the K-\L \ inequality holds:
\begin{equation}
\begin{aligned}
\phi'(f(x)-f(\bar{x}))\text{dist}(0,\partial f(x))\geq1. 
  \end{aligned}
\end{equation}
If $f$ satisfies the K-\L \ property at each point of $\textrm{dom}\partial f$ then $f$ is called a K-\L \ function.
\end{mydef}
\begin{mydef}(Semi-algebraic sets and functions \cite{PAMsequrence}) A subset $S$ of $\mathbb{R}$ is called the semi-algebraic set if there
exists a finite number of real polynomial functions $g_{ij}, h_{ij}$ such that $S=\bigcap_{j}\bigcup_{i}\{x\in \mathbb{R}^n: g_{ij}(x)=0,h_{ij}(x)<0\}$. A function $f$ is called the semi-algebraic function if its graph $\{(x,t)\in\mathbb{R}^n\times\mathbb{R}, t=f(x)\}$ is a semi-algebraic set.
\end{mydef}
{\red
Next, we verify the K-\L\  property of $L$ and then show the descent Lemma for $L(\mathcal{Z},\mathbf D_{k}, \mathcal{X}_{k})$. Afterwards, the relative error Lemma would be given. Finally, we establish the proof of Theorem \ref{th-1}.
\begin{lemma}[K-\L\  property Lemma]\label{semitoKL}
The function $L$
satisfies the K-\L\  property at each point.
\end{lemma}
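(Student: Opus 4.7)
The plan is to reduce the K-\L\ property to the much more checkable property of semi-algebraicity. Indeed, a standard result (used repeatedly in \cite{PAMsequrence}) states that every proper lower semi-continuous semi-algebraic function $f:\mathbb{R}^n\to\mathbb{R}\cup\{+\infty\}$ is a K-\L\ function, with a desingularizing $\varphi$ of the form $\varphi(s)=c s^{1-\theta}$ for some $\theta\in[0,1)$. So my strategy is simply to establish that $L$ is semi-algebraic, using the closure of the semi-algebraic class under finite sums.

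I would address the four summands one at a time. The quadratic term $Q(\mathcal{Z},\mathbf D,\mathcal{X})=\frac{\beta}{2}\|\mathcal{X}-\mathcal{Z}\times_3\mathbf D\|_F^2$ is a polynomial in the entries of its arguments and therefore trivially semi-algebraic. The indicator $\delta_{\mathcal{X}}(\mathcal{X})=\Phi(\mathcal{X})$ is supported on the affine subspace $\{\mathcal{X}:\mathcal{X}_\Omega=\mathcal{O}_\Omega\}$, which is cut out by finitely many linear equalities and hence semi-algebraic; the indicator of a semi-algebraic set is semi-algebraic. The same reasoning applies to $\delta_{\mathcal{D}}(\mathbf D)=\Psi(\mathbf D)$, whose support $\{\mathbf D:\|\mathbf D(:,i)\|_2^2-1=0,\ i=1,\dots,d\}$ is a finite intersection of polynomial equalities.

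The only non-obvious ingredient, and what I expect to be the main obstacle, is showing that $F(\mathcal{Z})=\sum_{i=1}^d\|\mathcal{Z}^{(i)}\|_*$ is semi-algebraic. For this I would recall that the singular values of a matrix are roots of the characteristic polynomial of $\mathbf X^\top\mathbf X$ (whose coefficients are polynomials in the entries of $\mathbf X$), and the roots of a polynomial system are semi-algebraic functions of the coefficients; equivalently, the nuclear norm admits a standard semidefinite representation
\begin{equation*}
\|\mathbf X\|_* = \min\Bigl\{\tfrac12(\mathrm{tr}(\mathbf W_1)+\mathrm{tr}(\mathbf W_2)):\begin{bmatrix}\mathbf W_1 & \mathbf X\\ \mathbf X^\top & \mathbf W_2\end{bmatrix}\succeq 0\Bigr\},
\end{equation*}
which exhibits its graph as a projection of a semi-algebraic set, hence semi-algebraic. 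Summing over the $d$ frontal slices preserves semi-algebraicity.

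Putting the four pieces together, $L$ is a finite sum of semi-algebraic functions and is therefore semi-algebraic. Since $L$ is also proper (its domain is nonempty, e.g. $\mathcal{Z}=0$ with any feasible $\mathbf D$, $\mathcal{X}=\mathcal{O}$) and lower semi-continuous (the nuclear norm and quadratic terms are continuous, and indicators of closed sets are lower semi-continuous — note that the feasible set of $\Psi$ is closed because it is defined by continuous equalities), the cited theorem gives the K-\L\ property at every point of $\mathrm{dom}\,\partial L$, establishing Lemma \ref{semitoKL}. No further calculation is required beyond the semi-algebraic check itself.
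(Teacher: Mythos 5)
Your proposal is correct and follows essentially the same route as the paper: both reduce the K-\L\ property to semi-algebraicity of $L$, verify that the quadratic term, the two indicator functions, and the nuclear-norm term are each semi-algebraic, and invoke the standard result that proper lower semi-continuous semi-algebraic functions are K-\L. The only difference is that you supply the explicit justification (via the semidefinite representation) for the semi-algebraicity of the nuclear norm, which the paper simply cites from the literature.
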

\begin{proof}[Proof of Lemma \ref{semitoKL}]
It is easy to verify that $Q$ is $\emph{C}^1$ function with locally Lipschitz continuous gradient and $F$, $\delta_{D}$, and $\delta_{\mathcal{X}}$ are proper and lower semi-continuous. Thus, $L$ is a proper lower semi-continuous function.
The nuclear norm and Frobenius norm are semialgebraic \cite{PAM2014}. Additionally, the indicator function with semialgebraic sets is semialgebraic \cite{PAM2014}.
As a semi-algebraic real valued function $f$ is a K-\L\ function, i.e., $f$ satisfies K-\L \ property at each $x \in \textrm{dom}(f)$ \cite{semitoKL},
the function $L$
satisfies the K-\L property at each point.
\end{proof}}

\begin{lemma}[Descent Lemma]\label{descent}
Assume that $L(\mathcal{Z},\mathbf D, \mathcal{X})$ is a $C^1$ function with locally Lipschitz continuous gradient and $\rho^z_k, \rho^d_k, \rho^x_k >0$. Let
$\{\mathcal{Z}_k,\mathbf{D}_{k},\mathcal{X}_{k}\}_{k \in \mathbb{N}}$ is generated by \eqref{updatingDZX}. Then
\begin{equation*}
\begin{aligned}
F(\mathcal{Z}_{k+1})+Q(\mathcal{Z}_{k+1},\mathbf D_{k}, \mathcal{X}_{k}) +\frac{\rho^z_k}{2}\|\mathcal{Z}_{k+1}-\mathcal{Z}_k\|_F^2 \leq &\\ F(\mathcal{Z}_{k})+Q(\mathcal{Z}_{k},\mathbf D_{k}, \mathcal{X}_{k}),&\\
\delta_{\mathcal{D}}(\mathbf D_{k+1})+Q(\mathcal{Z}_{k+1},\mathbf{D}_{k+1}, \mathcal{X}_{k}) +\frac{\rho^d_k}{2}\|\mathbf{D}_{k+1}-\mathbf{D}_{k}\|_F^2 \leq& \\ \delta_{\mathcal{D}}(\mathbf D_{k})+Q(\mathcal{Z}_{k+1},\mathbf{D}_{k}, \mathcal{X}_{k}),&\\
\delta_{\mathcal{X}}(\mathcal{X}_{k+1})+Q(\mathcal{Z}_{k+1},\mathbf D_{k+1}, \mathcal{X}_{k+1}) +\frac{\rho^x_k}{2}\|\mathcal{X}_{k+1}-\mathcal{X}_k\|_F^2\leq &\\ \delta_{\mathcal{X}}(\mathcal{X}_{k})+Q(\mathcal{Z}_{k+1},\mathbf D_{k+1}, \mathcal{X}_{k}).&
\end{aligned}
\end{equation*}
\end{lemma}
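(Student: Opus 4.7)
The three inequalities share a common structure and all follow immediately from the defining property that each block update in \eqref{updatingDZX} is taken as a minimizer of its proximally regularized sub-objective. My plan is to write out the argument once, in detail, for the $\mathcal{Z}$-update, and then note that the same one-line derivation transfers verbatim to the $\mathbf D$- and $\mathcal{X}$-updates.

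For the first inequality, the defining property of $\mathcal{Z}_{k+1}$ gives $M_1(\mathcal{Z}_{k+1}\mid\mathcal{Z}_k)\leq M_1(\mathcal{Z}\mid\mathcal{Z}_k)$ for every feasible test point $\mathcal{Z}$, and in particular for the choice $\mathcal{Z}=\mathcal{Z}_k$. Substituting this choice into the definition of $M_1$, the proximal term on the right-hand side collapses since $\|\mathcal{Z}_k-\mathcal{Z}_k\|_F^2=0$, and rearranging yields exactly
\[
F(\mathcal{Z}_{k+1})+Q(\mathcal{Z}_{k+1},\mathbf D_k,\mathcal{X}_k)+\tfrac{\rho_k^z}{2}\|\mathcal{Z}_{k+1}-\mathcal{Z}_k\|_F^2\leq F(\mathcal{Z}_k)+Q(\mathcal{Z}_k,\mathbf D_k,\mathcal{X}_k).
\]
The second and third inequalities are obtained by testing $M_2(\cdot\mid\mathbf D_k)$ against the feasible point $\mathbf D_k$ and $M_3(\cdot\mid\mathcal{X}_k)$ against the feasible point $\mathcal{X}_k$ in exactly the same manner, with the respective proximal penalties $\tfrac{\rho_k^d}{2}\|\mathbf D_k-\mathbf D_k\|_F^2$ and $\tfrac{\rho_k^x}{2}\|\mathcal{X}_k-\mathcal{X}_k\|_F^2$ vanishing on the right.

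The only point worth a brief verification is that the test iterates $\mathcal{Z}_k,\mathbf D_k,\mathcal{X}_k$ lie in the effective domains of their respective sub-objectives so that the right-hand sides are finite. This follows inductively from a feasible initialization: the projection step in \eqref{X1} keeps $(\mathcal{X}_k)_\Omega=\mathcal{O}_\Omega$ so that $\delta_{\mathcal{X}}(\mathcal{X}_k)=0$, the normalization in \eqref{D12} keeps every column of $\mathbf D_k$ unit-norm so that $\delta_{\mathcal{D}}(\mathbf D_k)=0$, and $F(\mathcal{Z}_k)=\|{\tt bdiag}(\mathcal{Z}_k)\|_*$ is always finite. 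I do not foresee a genuine obstacle in establishing this lemma: the $C^1$ Lipschitz-gradient hypothesis on the smooth coupling $Q$ and the positivity of the proximal parameters $\rho_k^z,\rho_k^d,\rho_k^x$ play no role in the three descent inequalities themselves; those assumptions will only become essential afterwards, when one combines these descents with the relative-error estimate to verify the sufficient-decrease condition and the subdifferential bound required by Theorem 6.2 of \cite{PAMsequrence} in the proof of Theorem \ref{th-1}.
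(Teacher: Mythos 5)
Your proposal is correct and is essentially identical to the paper's own proof: both arguments simply test the minimizing property $M_i(\cdot_{k+1}\mid\cdot_k)\leq M_i(\cdot_k\mid\cdot_k)$ at the previous iterate, where the proximal term vanishes. Your added remarks --- that the iterates stay in the effective domains of the indicator functions (so the right-hand sides are finite) and that the $C^1$/Lipschitz hypothesis is not actually used here --- are accurate and slightly more careful than the paper's one-line justification, but they do not change the route.
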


\begin{proof}[Proof of Lemma \ref{descent}]
When $\mathbf{D}_{k+1}$ and $\mathcal{X}_{k+1}$ are optimal solutions of $M_2$ and $M_3$, $\delta_{\mathcal{D}} = 0$ and $\delta_{\mathcal{X}} = 0$. By the definitions of $M_1, M_2$, and $M_3$, we clearly have that
\begin{equation*}
\begin{aligned}
F(\mathcal{Z}_{k+1})+Q(\mathcal{Z}_{k+1}&,\mathbf D_{k}, \mathcal{X}_{k}) +\frac{\rho^z_k}{2}\|\mathcal{Z}_{k+1}-\mathcal{Z}_k\|_F^2 \\
= M_1(\mathcal{Z}_{k+1}|\mathcal{Z}_{k})\leq & M_1(\mathcal{Z}_{k}|\mathcal{Z}_{k})\\
=&F(\mathcal{Z}_{k})+Q(\mathcal{Z}_{k},\mathbf D_{k}, \mathcal{X}_{k}),\\
\end{aligned}
\end{equation*}
\begin{equation*}
\begin{aligned}
\delta_{\mathcal{D}}(\mathbf D_{k+1})+Q(\mathcal{Z}_{k+1}&,\mathbf{D}_{k+1}, \mathcal{X}_{k}) +\frac{\rho^d_k}{2}\|\mathbf{D}_{k+1}-\mathbf{D}_{k}\|_F^2 \\
= M_2(\mathbf{D}_{k+1}|\mathbf{D}_{k})\leq & M_2(\mathbf{D}_{k}|\mathbf{D}_{k})\\
=&\delta_{\mathcal{D}}(\mathbf D_{k})+Q(\mathcal{Z}_{k+1},\mathbf{D}_{k}, \mathcal{X}_{k}),\\
\end{aligned}
\end{equation*}
\begin{equation*}
\begin{aligned}
\delta_{\mathcal{X}}(\mathcal{X}_{k+1})+Q(\mathcal{Z}_{k+1}&,\mathbf D_{k+1}, \mathcal{X}_{k+1}) +\frac{\rho^x_k}{2}\|\mathcal{X}_{k+1}-\mathcal{X}_k\|_F^2\\
= M_3(\mathcal{X}_{k+1}|\mathcal{X}_{k})\leq & M_3(\mathcal{X}_{k}|\mathcal{X}_{k}) \\
=& \delta_{\mathcal{X}}(\mathcal{X}_{k})+Q(\mathcal{Z}_{k+1},\mathbf D_{k+1}, \mathcal{X}_{k}).
\end{aligned}
\end{equation*}
The descent lemma has been proved.
\end{proof}

\begin{lemma}[Relative error Lemma]\label{rerror}
 $\{\mathcal{Z}_k,\mathbf{D}_{k},\mathcal{X}_{k}\}_{k\in\mathbb{N}}$ is generated by \eqref{updatingDZX} and $\rho^z_k, \rho^d_k, \rho^x_k >0$. Then there exists $V_{1,k+1}, V_{2,k+1}, V_{3,k+1}$, which satisfy the following formularies,
\begin{equation*}
\begin{aligned}
\|V_{k+1}^1\hspace{-1mm}+\nabla_{\mathcal{Z}}Q(\mathcal{Z}_{k+1},\mathbf D_{k}, \mathcal{X}_{k})\|_F &\hspace{-.5mm}\leq\hspace{-.5mm} \rho^z_k\|\mathcal{Z}_{k+1}\hspace{-.5mm}-\hspace{-.5mm}\mathcal{Z}_{k}\|_F,\\
\|V_{k+1}^2\hspace{-1mm}+\hspace{-1mm}\nabla_{\mathbf{D}}Q(\mathcal{Z}_{k+1},\mathbf D_{k+1}, \mathcal{X}_{k})\|_F &\hspace{-.5mm}\leq\hspace{-.5mm} \rho^d_k\|\mathbf{D}_{k+1}\hspace{-.5mm}-\hspace{-.5mm}\mathbf{D}_{k}\|_F,\\
\|V_{k+1}^3\hspace{-1mm}+\hspace{-1mm}\nabla_{\mathcal{X}}Q(\mathcal{Z}_{k+1},\mathbf D_{k+1}, \mathcal{X}_{k+1})\|_F &\hspace{-.5mm}\leq\hspace{-.5mm} \rho^x_k\|\mathcal{X}_{k+1}\hspace{-.5mm}-\hspace{-.5mm}\mathcal{X}_{k}\|_F,\\
\end{aligned}
\end{equation*}
where $V_{k+1}^1 \in \partial F(\mathcal{Z}_{k+1})$, $V_{k+1}^2 \in \partial \delta_{\mathcal{D}}(\mathbf D_{k+1})$, $V_{k+1}^3 \in \partial \delta_{\mathcal{X}}(\mathcal{X}_{k+1})$, and $\nabla$ indicates the (partial) gradient.
\end{lemma}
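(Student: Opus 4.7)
The plan is to obtain each of the three inequalities as a direct consequence of Fermat's rule applied to the corresponding subproblem in \eqref{updatingDZX}, since each subproblem is of the form ``nonsmooth lower semi-continuous function plus smooth function plus quadratic proximal term''. The overall work is therefore routine once the subdifferential sum rule is justified; the ``hard part'' is essentially only bookkeeping of which variable is frozen where, together with the standard fact that adding a $C^1$ term to a nonsmooth function results in a subdifferential that is the sum of the original (limiting) subdifferential and the ordinary gradient.

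First, for the $\mathcal{Z}$-update: since $\mathcal{Z}_{k+1}$ is a minimizer of $M_1(\cdot\mid\mathcal{Z}_k) = F(\cdot) + Q(\cdot,\mathbf{D}_k,\mathcal{X}_k) + \tfrac{\rho^z_k}{2}\|\cdot - \mathcal{Z}_k\|_F^2$, and since $Q(\cdot,\mathbf{D}_k,\mathcal{X}_k)$ together with the proximal term are $C^1$ in $\mathcal{Z}$, Fermat's rule combined with the sum rule for the limiting subdifferential gives
\[
0 \;\in\; \partial F(\mathcal{Z}_{k+1}) + \nabla_{\mathcal{Z}} Q(\mathcal{Z}_{k+1},\mathbf{D}_k,\mathcal{X}_k) + \rho^z_k (\mathcal{Z}_{k+1}-\mathcal{Z}_k).
\]
I then define $V^1_{k+1} := -\nabla_{\mathcal{Z}} Q(\mathcal{Z}_{k+1},\mathbf{D}_k,\mathcal{X}_k) - \rho^z_k(\mathcal{Z}_{k+1}-\mathcal{Z}_k)$, which lies in $\partial F(\mathcal{Z}_{k+1})$, and take Frobenius norms of both sides of $V^1_{k+1} + \nabla_{\mathcal{Z}} Q(\mathcal{Z}_{k+1},\mathbf{D}_k,\mathcal{X}_k) = -\rho^z_k(\mathcal{Z}_{k+1}-\mathcal{Z}_k)$ to obtain the first bound with equality (hence, in particular, with $\leq$).

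Second, for the $\mathbf{D}$-update, the same argument applies to $M_2(\cdot\mid\mathbf{D}_k) = \delta_{\mathcal{D}}(\cdot) + Q(\mathcal{Z}_{k+1},\cdot,\mathcal{X}_k) + \tfrac{\rho^d_k}{2}\|\cdot-\mathbf{D}_k\|_F^2$, yielding $0 \in \partial \delta_{\mathcal{D}}(\mathbf{D}_{k+1}) + \nabla_{\mathbf{D}} Q(\mathcal{Z}_{k+1},\mathbf{D}_{k+1},\mathcal{X}_k) + \rho^d_k(\mathbf{D}_{k+1}-\mathbf{D}_k)$. Choosing $V^2_{k+1}$ as the corresponding element of $\partial\delta_{\mathcal{D}}(\mathbf{D}_{k+1})$ and taking norms produces the second bound. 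Third, the same template applied to $M_3(\cdot\mid\mathcal{X}_k)$ gives $V^3_{k+1} := -\nabla_{\mathcal{X}} Q(\mathcal{Z}_{k+1},\mathbf{D}_{k+1},\mathcal{X}_{k+1}) - \rho^x_k(\mathcal{X}_{k+1}-\mathcal{X}_k) \in \partial\delta_{\mathcal{X}}(\mathcal{X}_{k+1})$, which upon norming yields the third bound.

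The only subtlety to flag is the legitimacy of the subdifferential sum rule. Because $Q$ and the quadratic proximal term are $C^1$ (in fact real-analytic) with locally Lipschitz gradient, while $F$, $\delta_{\mathcal{D}}$, and $\delta_{\mathcal{X}}$ are proper and lower semi-continuous, the standard result $\partial(g + h) = \partial g + \nabla h$ for smooth $h$ (see \cite{rockafellar2009variational}) applies directly, so each first-order optimality condition above is rigorous. Consequently, the three inequalities hold with $=$ on the right-hand side (and thus $\leq$), which is exactly what the subsequent global convergence argument of \cite{PAMsequrence} requires.
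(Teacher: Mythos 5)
Your argument is exactly the paper's: apply Fermat's rule to each subproblem, use the sum rule to split off the smooth terms, define each $V_{k+1}^i$ as the negative of the gradient plus the proximal term, and take Frobenius norms to get the bounds with equality. The only difference is that you explicitly justify the subdifferential sum rule, which the paper leaves implicit; otherwise the proof is the same.
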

\begin{proof}[Proof of Lemma \ref{rerror}] 
By the definition of $M_1$, $M_2$, and $M_3$, we have
\begin{equation*}
\begin{aligned}
0 \in &\partial F(\mathcal{Z}_{k+1})\hspace{-.5mm}+\hspace{-.5mm}\nabla_{\mathcal{Z}}Q(\mathcal{Z}_{k+1},\mathbf D_{k}, \mathcal{X}_{k})\hspace{-.5mm}+\hspace{-.5mm}\rho_k^z(\mathcal{Z}_{k+1}\hspace{-.5mm}-\hspace{-.5mm}\mathcal{Z}_{k}),\\
0 \in &\partial \delta_{\mathcal{D}}(\mathbf D_{k+1})\hspace{-.5mm}+\hspace{-.5mm}\nabla_{\mathbf{D}}Q(\mathcal{Z}_{k+1},\mathbf D_{k+1}, \mathcal{X}_{k})
\hspace{-.5mm}+\hspace{-.5mm}\rho_k^d(\mathbf D_{k+1}\hspace{-.5mm}-\hspace{-.5mm}\mathbf D_{k}),\\
0 \in &\partial \delta_{\mathcal{X}}(\mathcal{X}_{k+1})\hspace{-.5mm}+\hspace{-.5mm}\nabla_{\mathcal{X}}Q(\mathcal{Z}_{k+1},\mathbf D_{k+1}, \mathcal{X}_{k+1})
\hspace{-.5mm}+\hspace{-.5mm}\rho_k^x(\mathcal{X}_{k+1}\hspace{-.5mm}-\hspace{-.5mm}\mathcal{X}_{k}).
\end{aligned}
\end{equation*}
Let
\begin{equation*}\left\{
\begin{aligned}
V_{k+1}^1 := &-\nabla_{\mathcal{Z}}Q(\mathcal{Z}_{k+1},\mathbf D_{k}, \mathcal{X}_{k})-\rho_k^z(\mathcal{Z}_{k+1}-\mathcal{Z}_{k})\\
V_{k+1}^2:=&-\nabla_{\mathcal{Z}}Q(\mathcal{Z}_{k+1},\mathbf D_{k+1}, \mathcal{X}_{k})-\rho_k^d(\mathbf D_{k+1}-\mathbf D_{k})\\
V_{k+1}^3:=&-\nabla_{\mathcal{Z}}Q(\mathcal{Z}_{k+1},\mathbf D_{k+1}, \mathcal{X}_{k})-\rho_k^x(\mathcal{X}_{k+1}-\mathcal{X}_{k})\\
\end{aligned}\right.
\end{equation*}
It is clear that $V_{k+1}^1\in \partial F(\mathcal{Z}_{k+1})$, $V_{k+1}^2\in \partial \delta_{\mathcal{D}}(\mathbf D_{k+1})$, and $V_{k+1}^3\in \partial \delta_{\mathcal{X}}(\mathcal{X}_{k+1})$. Thus, we have
\begin{equation*}\left\{
\begin{aligned}
&\|V_{k+1}^1+\nabla_{\mathcal{Z}}Q(\mathcal{Z}_{k+1},\mathbf D_{k}, \mathcal{X}_{k})\|_F = \rho_k^z\|\mathcal{Z}_{k+1}-\mathcal{Z}_{k}\|_F,\\
&\|V_{k+1}^2+\nabla_{\mathbf D}Q(\mathcal{Z}_{k+1},\mathbf D_{k+1}, \mathcal{X}_{k})\|_F  = \rho_k^d\|\mathbf D_{k+1}-\mathbf D_{k}\|_F,\\
&\|V_{k+1}^3\hspace{-.5mm}+\hspace{-.5mm}\nabla_{\mathcal{X}}Q(\mathcal{Z}_{k+1},\mathbf D_{k+1}, \mathcal{X}_{k+1})\|_F  \hspace{-.5mm}=\hspace{-.5mm} \rho_k^x\|\mathcal{X}_{k+1}\hspace{-.5mm}-\hspace{-.5mm}\mathcal{X}_{k}\|_F.\\
\end{aligned}\right.
\end{equation*}
The proof of relative error Lemma has been finished.
\end{proof}

Now, we begin to establish our proof of Theorem \ref{th-1}.

{\red\begin{proof}[Proof of Theorem \ref{th-1}]
From Lemma \ref{descent}, we have that the objective function value monotonically decreases.
Firstly, we can see that the indicator function $\delta_\mathcal{D}(\mathbf D) = \Phi(\mathbf D)$ should be 0 from its definition. Thus, $$\|\mathbf D\|^2_F = \sum_i \|\mathbf D(:,i)\|^2_2  = d,$$ which means {\it $\{\mathbf{D}_k\}_{k\in\mathbb{N}}$ is bounded}.
Meanwhile, from the monotonic decreasing, the nonnegative terms $F(\mathcal{Z}) = \sum_{k=1}^{d} \|\mathcal{Z}^{(k)} \|_{*}$ and $ Q(\mathcal{Z},\mathbf D,\mathcal{X})= \frac{\beta}{2}\|\mathcal{X} - \mathcal{Z} \times_3 \mathbf D\|_F^2$ are bounded.
Then,  \[\|\mathcal{Z}\|_F^2  =   \sum_{k=1}^{d} \|\mathcal{Z}^{(k)}\|_F^2  \leq   \sum_{k=1}^{d} \|(\mathcal{Z}^{(k)})\|_*^2.\]
That is, {\it $\{\mathbf{Z}_k\}_{k\in\mathbb{N}}$ is bounded}. 
Next, from the triangle inequality, we have $$\|\mathcal{X}\|_F - \|\mathcal{Z}\|_F\|\mathbf D\|_F
\leq  \|\mathcal{X}\|_F - \|\mathcal{Z} \times_3 \mathbf D\|_F
 \leq \|\mathcal{X}- \mathcal{Z} \times_3 \mathbf D\|_F.$$ This is equivalent to $$\|\mathcal{X}\|_F \leq   \|\mathcal{X} - \mathcal{Z} \times_3 \mathbf D\|_F + \|\mathcal{Z}\|_F\|\mathbf D\|_F.$$
Therefore,  {\it $\{\mathbf{X}_k\}_{k\in\mathbb{N}}$ is bounded}.

By lemma \ref{semitoKL}, the sequence $\{\mathcal{Z}_k,\mathbf D_k,\mathcal{X}_k\}_{k\in\mathbb{N}}$ is a bounded sequence with the K-\L\ property at each point.
Combining Lemma \ref{descent} and Lemma \ref{rerror} with the above property of $L$, the process of updating in \eqref{updatingDZX} is factually a special instance of the algorithm 4 described in \cite{PAMsequrence}.
Lemma \ref{descent} and Lemma \ref{rerror} correspond to the (64)-(65)-(66) in \cite{PAMsequrence}.
Under these conditions,
this proof conforms to Theorem 6.2 of \cite{PAMsequrence}.
Thus, the bounded sequence $\{\mathcal{Z}_k,\mathbf D_k,\mathcal{X}_k\}_{k\in\mathbb{N}}$ converges to a critical
point of $L(\mathcal{Z},\mathbf D,\mathcal{X})$.
\end{proof}
}
Algorithm \ref{alg} is a direct multi-block generalization of \eqref{updatingDZX}. The proof of its convergence accords with the proof of Theorem \ref{th-1} and can be easily obtained. Meanwhile, the above convergence analysis is more similar to the analysis in \cite{PAMsequrence}, being convenient for the verification of readers. Therefore, we establish the proof of Theorem \ref{th-1} here.

\section{Numerical Experiments}\label{Sec-Exp}

In this section, we compare our method with other state-of-the-art methods.
Compared methods consist of:
one baseline Tucker-rank based method HaLRTC\footnote{\scriptsize\url{https://www.cs.rochester.edu/~jliu/code/TensorCompletion.zip}} \cite{Liu2013PAMItensor},
a Bayesian CP-factorization based method (BCPF\footnote{\scriptsize\url{https://github.com/qbzhao/BCPF}}) \cite{zhao2015bayesian},
a tensor ring decomposition based method (TRLRF\footnote{\scriptsize\url{https://github.com/yuanlonghao/TRLRF}}) \cite{yuan2019tensor},
a t-SVD based method (TNN\footnote{\scriptsize\url{https://github.com/jamiezeminzhang/Tensor_Completion_and_Tensor_RPCA}}) \cite{zhang2017exact},
a DCT induced TNN minimization method (DCTNN\footnote{\scriptsize Implemented by ourselves based on the code of TNN}) \cite{lu2019low},
and a framelet represented TNN minimization method (FTNN\footnote{\scriptsize\url{https://github.com/TaiXiangJiang/Framelet-TNN}}) \cite{jiang2019framelet}.
We select four types of tensor data, including videos, HSIs, traffic data, and MRI data, to show that our method is adaptive to different types of data.

Since the algorithm of our method is a non-convex approach, the initialization of our algorithm is important.
We use a simple linear interpolation strategy, which is employed in \cite{yair2018multi} and convenient to implement with low cost, to fill
in the missing pixels and obtaining $\mathcal{X}_0$ for our method.
{\red As the index of observed entries $\Omega$ is known, we first sort $n_1n_2$ tubes of $\mathcal{X}_0\in\mathbb{R}^{n_1\times n_2\times n_3}$ based on the number of observed entries in each tube. Then, we select the first $d$ tubes, which contain the observed entries as much as possible to construct $\mathbf D\in\mathbb{R}^{d\times n_3}$. Finally, the columns of $\mathbf D$ are nomoalized to satisfy $\|\mathbf D(:,i)\|_2 = 1$ for $i = 1,\cdots,d$. This strategy comes from many traditionary dictionary learning techniques, such as [44]. Then, we fix $\mathcal{X}=\mathcal{X}_0$ and run 10 iterations of our method to initialize the $\mathcal{Z}_0$ with random inputs.} 

{\red
Throughout all the experiments in this paper, parameters of the proposed method are set as: $d=5n_3$, $\beta = 10$, $\rho^z=20$, $\rho^d=1$, and $\rho^x=1$.
In the framework of the HQS algorithm, the penalty parameter $\beta$ is required to reach infinite when iteration goes on. Therefore, we enlarge $\beta$ at the 15-th, 20-th, and 25-th iterations by multiplying the factor 1.5 and enlarge $\beta$ by multiplying the factor 1.2 at each iteration from the 30-th iteration until satisfying the condition of convergence. $\rho^z$, $\rho^d$, and $\rho^x$ are selected by grid search form the candidate set $\{0.1,0.2,0.5,1,2,5,10,20,50,100\}$, while $d$ and $\rho$ are manually tuned.}

{\red As for the compared methods, their parameters are manually tuned for the best performances.  Specifically, as the models of TNN and DCTNN are optimized by ADMM, we set the parameter $\beta$, which is introduced when building the argument Lagrangian function, as $10^{-2}$ at the beginning and enlarge it with a factor $1.2$ at each iteration.
For other methods,
we set
i) $\alpha = [1,1,5]$ and $\rho=10^{-2}$ (as referred to (42) in [4]) for HaLRTC,
ii) removing unnecessary components automatically, random initializations, and the initial rank 200 for  BCPF,
iii) $\text{TR-rank}=12$, $\mu=1$ and $\lambda=10$ (as referred to (15) in [21]) for TRLRF ,
iv) using the default setting in [36] for  FTNN.}

\begin{table*}[!t]
\renewcommand\arraystretch{0.75}\setlength{\tabcolsep}{3pt}\footnotesize
\caption{PSNR, SSIM, and UIQI of results by different methods with different sampling rates on the \textbf{video} data. The {\color{red}best}, the {\color{blue}second best}, and the {\color{greed} third best} values are respectively highlighted by {\color{red}red}, {\color{blue}blue}, and {\color{greed}green} colors.}\vspace{-2mm}
\centering
\begin{tabular}{llccccccccccccccccccccccc}\toprule
 \multirow{3}{*}{Video} & SR    & \multicolumn{3}{c}{10\%} && \multicolumn{3}{c}{20\%} && \multicolumn{3}{c}{30\%} && \multicolumn{3}{c}{40\%} && \multicolumn{3}{c}{50\%} &\multirow{2}{*}{Time} \\\cmidrule(r){2-21}
&
Method & \footnotesize PSNR  &\footnotesize SSIM &\footnotesize UIQI &&\footnotesize PSNR  &\footnotesize SSIM &\footnotesize UIQI &&\footnotesize PSNR  &\footnotesize SSIM &\footnotesize UIQI &&\footnotesize PSNR  &\footnotesize SSIM &\footnotesize UIQI &&\footnotesize PSNR  &\footnotesize SSIM &\footnotesize UIQI& (s) \\\midrule
\multirow{10}{*}{\textit{foreman}}
&
Observed  &            3.96   &            0.010   &            0.006  & &            4.48   &            0.017   &            0.016  & &            5.05   &            0.025   &            0.027  & &            5.72   &            0.035   &            0.040  & &            6.51   &            0.047   &            0.056  &   0 \\
&HaLRTC    &            20.10   &            0.511   &            0.328  & &            23.88   &            0.700   &            0.562  & &            26.77   &            0.812   &            0.699  & &            29.35   &            0.883   &            0.790  & &            31.85   &            0.928   &            0.853  &   4 \\
&BCPF      &            23.58   &            0.610   &            0.458  & &            26.03   &            0.723   &            0.586  & &            27.27   &            0.769   &            0.638  & &            27.98   &            0.793   &            0.666  & &            28.42   &            0.808   &            0.682  &   829 \\
&TRLRF     & \color{greed} 24.63   &            0.616   & \color{greed} 0.536  & & \color{greed} 27.72   & \color{greed} 0.778   & \color{greed} 0.667  & &            29.01   &            0.830   &            0.719  & &            29.94   &            0.861   &            0.756  & &            30.95   &            0.887   &            0.788  &   387 \\
&TNN       &            23.71   &            0.606   &            0.489  & &            26.90   &            0.748   &            0.640  & &            29.13   &            0.824   &            0.720  & &            31.28   &            0.880   &            0.783  & &            33.47   &            0.921   &            0.835  &   30 \\
&DCTNN     &            24.27   & \color{greed} 0.632   &            0.519  & &            27.23   &            0.766   &            0.657  & & \color{greed} 29.54   & \color{greed} 0.843   & \color{greed} 0.740  & & \color{greed} 31.80   & \color{greed} 0.897   & \color{greed} 0.804  & & \color{greed} 34.07   & \color{greed} 0.934   & \color{greed} 0.854  &   21 \\
&FTNN      & \color{blue} 25.41   & \color{blue} 0.735   & \color{blue} 0.603  & & \color{blue} 28.51   & \color{blue} 0.849   & \color{blue} 0.741  & & \color{blue} 30.92   & \color{blue} 0.904   & \color{blue} 0.815  & & \color{blue} 33.17   & \color{blue} 0.937   & \color{blue} 0.862  & & \color{blue} 35.44   & \color{blue} 0.960   & \color{blue} 0.900  &   112 \\
&DTNN  & \color{red} 26.22   & \color{red} 0.799   & \color{red} 0.676  & & \color{red} 29.26   & \color{red} 0.875   & \color{red} 0.778  & & \color{red} 31.77   & \color{red} 0.917   & \color{red} 0.837  & & \color{red} 34.13   & \color{red} 0.946   & \color{red} 0.879  & & \color{red} 36.32   & \color{red} 0.964   & \color{red} 0.907  &   301 \\
\midrule
\multirow{10}{*}{\textit{carphone}}
&Observed  &            7.04   &            0.023   &            0.010  & &            7.56   &            0.039   &            0.026  & &            8.13   &            0.057   &            0.046  & &            8.81   &            0.077   &            0.070  & &            9.59   &            0.100   &            0.097  &   0 \\
 &HaLRTC    &            24.71   &            0.779   &            0.596  & &            28.57   &            0.883   &            0.751  & &            31.31   &            0.930   & \color{greed} 0.827  & &            33.67   & \color{greed} 0.956   & \color{greed} 0.875  & & \color{greed} 35.84   & \color{greed} 0.972   & \color{blue} 0.908  &   6 \\
 &BCPF      &            27.91   &            0.812   &            0.638  & &            29.82   &            0.864   &            0.703  & &            31.30   &            0.894   &            0.744  & &            32.25   &            0.911   &            0.767  & &            32.88   &            0.921   &            0.781  &   825 \\
 &TRLRF     & \color{blue} 29.44   & \color{greed} 0.841   & \color{greed} 0.691  & & \color{blue} 32.10   & \color{greed} 0.905   & \color{greed} 0.767  & & \color{blue} 33.58   & \color{greed} 0.930   &            0.805  & & \color{greed} 34.71   &            0.945   &            0.832  & &            35.63   &            0.955   &            0.853  &   414 \\
 &TNN       &            27.44   &            0.804   &            0.639  & &            30.00   &            0.873   &            0.725  & &            31.81   &            0.909   &            0.774  & &            33.44   &            0.934   &            0.813  & &            35.06   &            0.952   &            0.846  &   31 \\
 &DCTNN     &            28.21   &            0.829   &            0.669  & &            30.64   &            0.889   &            0.747  & &            32.37   &            0.920   &            0.792  & &            33.96   &            0.943   &            0.829  & &            35.57   &            0.959   &            0.859  &   21 \\
 &FTNN      & \color{greed} 29.16   & \color{blue} 0.880   & \color{blue} 0.740  & & \color{greed} 31.58   & \color{blue} 0.927   & \color{blue} 0.815  & & \color{greed} 33.43   & \color{blue} 0.949   & \color{blue} 0.855  & & \color{blue} 35.03   & \color{blue} 0.963   & \color{blue} 0.884  & & \color{blue} 36.61   & \color{blue} 0.973   & \color{greed} 0.906  &   118 \\
&DTNN  & \color{red} 29.46   & \color{red} 0.896   & \color{red} 0.763  & & \color{red} 32.89   & \color{red} 0.943   & \color{red} 0.838  & & \color{red} 35.49   & \color{red} 0.964   & \color{red} 0.879  & & \color{red} 37.57   & \color{red} 0.975   & \color{red} 0.904  & & \color{red} 39.35   & \color{red} 0.982   & \color{red} 0.922  &   275 \\
\midrule
\multirow{10}{*}{\textit{container}}&
 Observed  &            4.87   &            0.011   &            0.007  & &            5.38   &            0.021   &            0.018  & &            5.96   &            0.032   &            0.031  & &            6.63   &            0.045   &            0.047  & &            7.42   &            0.060   &            0.064  &   0 \\
 &HaLRTC    &            25.96   &            0.855   &            0.614  & &            30.58   &            0.935   &            0.779  & &            34.98   &            0.970   &            0.879  & &            39.56   &            0.986   &            0.937  & &            44.54   &            0.994   &            0.969  &   6 \\
 &BCPF      &            28.97   &            0.880   &            0.629  & &            33.38   &            0.926   &            0.707  & &            35.77   &            0.944   &            0.748  & &            37.81   &            0.954   &            0.777  & &            38.66   &            0.960   &            0.795  &   875 \\
 &TRLRF     & \color{red} 32.82   & \color{greed} 0.932   &            0.738  & & \color{greed} 37.55   &            0.961   &            0.817  & &            39.57   &            0.969   &            0.851  & &            40.64   &            0.974   &            0.875  & &            41.94   &            0.980   &            0.900  &   387 \\
 &TNN       &            30.04   &            0.909   &            0.715  & &            35.55   &            0.963   &            0.853  & &            38.81   &            0.979   &            0.905  & &            41.55   &            0.986   &            0.934  & &            44.01   &            0.991   &            0.954  &   29 \\
 &DCTNN     &            31.61   &            0.930   & \color{greed} 0.762  & & \color{blue} 38.27   & \color{greed} 0.977   & \color{greed} 0.892  & & \color{blue} 42.69   & \color{blue} 0.989   & \color{greed} 0.940  & & \color{blue} 45.91   & \color{blue} 0.993   & \color{blue} 0.960  & & \color{blue} 48.43   & \color{blue} 0.996   & \color{blue} 0.972  &   19 \\
 &FTNN      & \color{greed} 32.43   & \color{blue} 0.948   & \color{blue} 0.809  & &            37.38   & \color{blue} 0.978   & \color{blue} 0.904  & & \color{greed} 41.41   & \color{greed} 0.988   & \color{blue} 0.946  & & \color{greed} 44.42   & \color{greed} 0.992   & \color{greed} 0.958  & & \color{greed} 47.16   & \color{greed} 0.994   & \color{greed} 0.971  &   146 \\
 &DTNN  & \color{blue} 32.50   & \color{red} 0.956   & \color{red} 0.840  & & \color{red} 39.12   & \color{red} 0.985   & \color{red} 0.929  & & \color{red} 43.72   & \color{red} 0.992   & \color{red} 0.959  & & \color{red} 47.28   & \color{red} 0.995   & \color{red} 0.972  & & \color{red} 49.82   & \color{red} 0.997   & \color{red} 0.980  &   355 \\
\midrule
\multirow{10}{*}{\textit{highway}}&
 Observed  &            3.52   &            0.010   &            0.003  & &            4.04   &            0.015   &            0.008  & &            4.61   &            0.020   &            0.014  & &            5.28   &            0.027   &            0.021  & &            6.07   &            0.034   &            0.029  &   0 \\
 &HaLRTC    &            28.80   &            0.854   &            0.604  & &            31.55   & \color{greed} 0.909   & \color{greed} 0.730  & &            33.57   & \color{greed} 0.937   & \color{greed} 0.797  & &            35.26   & \color{greed} 0.954   & \color{blue} 0.844  & &            36.83   & \color{greed} 0.966   & \color{red} 0.880  &   5 \\
 &BCPF      &            29.96   &            0.840   &            0.593  & &            32.10   &            0.879   &            0.664  & &            33.17   &            0.897   &            0.693  & &            34.16   &            0.912   &            0.716  & &            34.57   &            0.918   &            0.729  &   714 \\
 &TRLRF     & \color{blue} 31.31   &            0.857   & \color{greed} 0.661  & & \color{blue} 33.81   &            0.905   &            0.729  & & \color{blue} 35.35   &            0.929   &            0.773  & & \color{blue} 36.40   &            0.943   &            0.806  & & \color{blue} 37.59   &            0.956   &            0.841  &   387 \\
 &TNN       &            30.19   &            0.852   &            0.631  & &            32.07   &            0.893   &            0.704  & &            33.57   &            0.917   &            0.753  & &            34.90   &            0.936   &            0.794  & &            36.26   &            0.951   &            0.832  &   26 \\
 &DCTNN     &            30.59   & \color{greed} 0.864   &            0.648  & &            32.35   &            0.899   &            0.715  & &            33.79   &            0.922   &            0.762  & &            35.12   &            0.939   &            0.802  & &            36.49   &            0.954   &            0.838  &   19 \\
 &FTNN      & \color{greed} 31.23   & \color{blue} 0.893   & \color{blue} 0.693  & & \color{greed} 33.09   & \color{blue} 0.926   & \color{blue} 0.764  & & \color{greed} 34.63   & \color{blue} 0.944   & \color{blue} 0.808  & & \color{greed} 35.93   & \color{blue} 0.956   & \color{greed} 0.842  & & \color{greed} 37.25   & \color{blue} 0.966   & \color{greed} 0.875  &   123 \\
 &DTNN  & \color{red} 31.70   & \color{red} 0.902   & \color{red} 0.711  & & \color{red} 33.96   & \color{red} 0.931   & \color{red} 0.774  & & \color{red} 35.81   & \color{red} 0.948   & \color{red} 0.815  & & \color{red} 37.34   & \color{red} 0.959   & \color{red} 0.849  & & \color{red} 38.71   & \color{red} 0.968   & \color{blue} 0.877  &   223 \\
\bottomrule
\end{tabular}
\label{Tab-Video-Foreman}
\end{table*}

\subsection{Video Data}

In this subsection, we test our method for the video data completion and select four videos\footnote{\scriptsize Videos available at  http://trace.eas.asu.edu/yuv/.} named ``\textit{foreman}'' ``\textit{carphone}'' ``\textit{highway}'' and ``\textit{container}'' of the size $144\times176\times50$ (height$\times$width$\times$frame) to conduct the comparisons. The sampling rate (SR) varies from 10\% to 50\%. We compute the peak signal-to-noise ratio (PSNR), the structural similarity index (SSIM) \cite{ssim}, and the universal image quality index (UIQI) \cite{wang2002universal} of the results by different methods. Higher values of these three quality metrics indicate better completion performances.

\begin{figure*}[!t]
\setlength{\tabcolsep}{2pt}
\renewcommand\arraystretch{1}
\centering\footnotesize
\begin{tabular}{ccccc ccccc}
Observed & HaLRTC \cite{Liu2013PAMItensor} & BCPF \cite{zhao2015bayesian} & TRLRF \cite{yuan2019tensor}&
 TNN \cite{zhang2017exact} & DCTNN \cite{lu2019low} & FTNN \cite{jiang2019framelet} & DTNN &  Original\\
\includegraphics[width=0.1\linewidth]{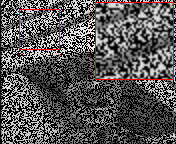} &
\includegraphics[width=0.1\linewidth]{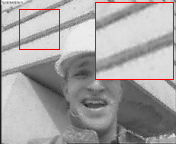} &
\includegraphics[width=0.1\linewidth]{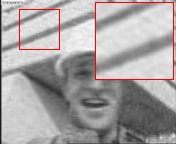} &
\includegraphics[width=0.1\linewidth]{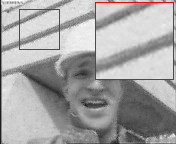} &
\includegraphics[width=0.1\linewidth]{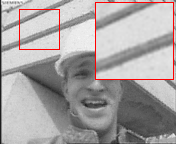} &
\includegraphics[width=0.1\linewidth]{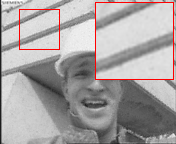} &
\includegraphics[width=0.1\linewidth]{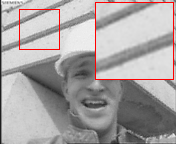} &
\includegraphics[width=0.1\linewidth]{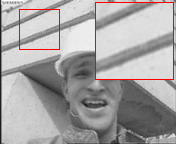} &
\includegraphics[width=0.1\linewidth]{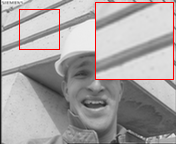} \\

\includegraphics[width=0.1\linewidth]{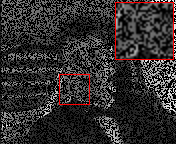} &
\includegraphics[width=0.1\linewidth]{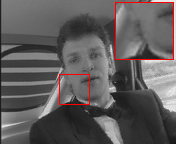} &
\includegraphics[width=0.1\linewidth]{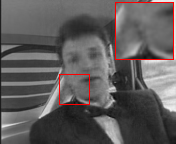} &
\includegraphics[width=0.1\linewidth]{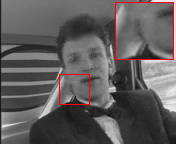} &
\includegraphics[width=0.1\linewidth]{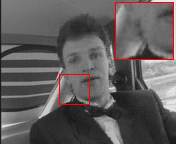} &
\includegraphics[width=0.1\linewidth]{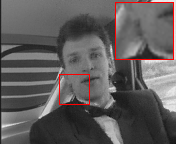} &
\includegraphics[width=0.1\linewidth]{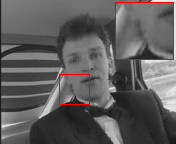} &
\includegraphics[width=0.1\linewidth]{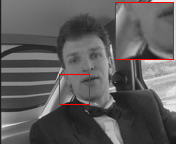} &
\includegraphics[width=0.1\linewidth]{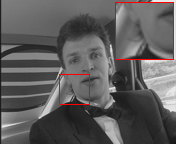}\\

\includegraphics[width=0.1\linewidth]{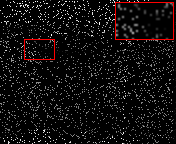} &
\includegraphics[width=0.1\linewidth]{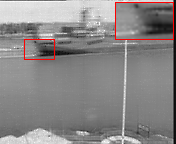} &
\includegraphics[width=0.1\linewidth]{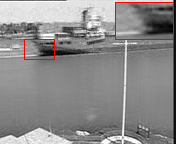} &
\includegraphics[width=0.1\linewidth]{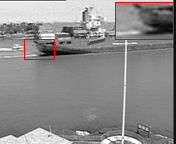} &
\includegraphics[width=0.1\linewidth]{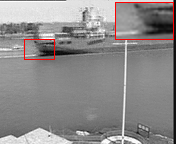} &
\includegraphics[width=0.1\linewidth]{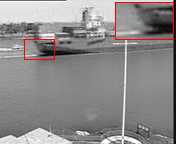} &
\includegraphics[width=0.1\linewidth]{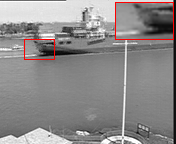} &
\includegraphics[width=0.1\linewidth]{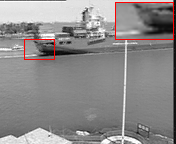} &
\includegraphics[width=0.1\linewidth]{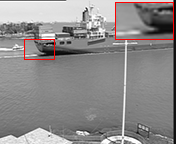} \\

\includegraphics[width=0.1\linewidth]{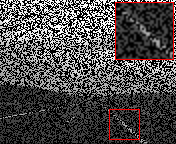} &
\includegraphics[width=0.1\linewidth]{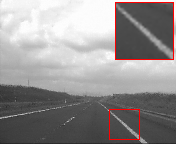} &
\includegraphics[width=0.1\linewidth]{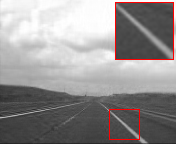} &
\includegraphics[width=0.1\linewidth]{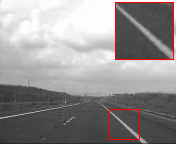} &
\includegraphics[width=0.1\linewidth]{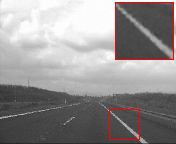} &
\includegraphics[width=0.1\linewidth]{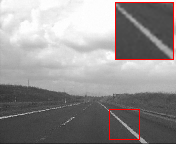} &
\includegraphics[width=0.1\linewidth]{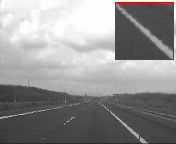} &
\includegraphics[width=0.1\linewidth]{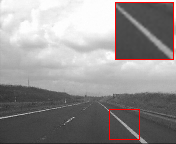} &
\includegraphics[width=0.1\linewidth]{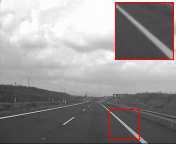} \\

\end{tabular}\vspace{-1mm}
\caption{One frame of the results on the \textbf{video} data. From top to bottom: The 22-th frame of ``\textit{foreman}'' (SR = 50\%), the 5-th frame of ``carphone'' (SR = 50\%), the 39-th frame of ``container'' (SR = 10\%), and the 48-th frame of ``highway'' (SR = 50\%).}
\label{videoframe}
\end{figure*}

In Tab. \ref{Tab-Video-Foreman}, we report the quantitative metrics of the results obtained by different methods and the average running time on the video data. From Tab. \ref{Tab-Video-Foreman}, it can be found that the results by TRLRF are promising when the sampling rate is low. The performance of FTNN is better than TNN and DCTNN for the video ``\textit{foreman}'', while DCTNN exceeds FTNN and TNN for the video ``\textit{container}''. This reveals the predefined transformations lack flexibility. Meanwhile, with minor exceptions, our DTNN achieves the best performance for different sampling rates, illustrating the {\red superiority} of the data adaptive dictionary.

\begin{table*}[!t]
\renewcommand\arraystretch{0.75}\setlength{\tabcolsep}{1.5pt}
\centering
\caption{PSNR, SSIM, and SAM of results by different methods with different sampling rates on the \textbf{HSI} data. The {\color{red}best}, the {\color{blue}second best}, and the {\color{greed} third best} values are respectively highlighted by {\color{red}red}, {\color{blue}blue}, and {\color{greed}green} colors.}\vspace{-2mm}
\begin{tabular}{llccccccccccccccccccccccccc}\toprule
\multirow{3}{*}{HSI}&SR    & \multicolumn{3}{c}{5\%} && \multicolumn{3}{c}{10\%} && \multicolumn{3}{c}{20\%} && \multicolumn{3}{c}{30\%} && \multicolumn{3}{c}{40\%} && \multicolumn{3}{c}{50\%}  &&\multirow{2}{*}{Time} \\\cmidrule(r){2-25}

&Method & \footnotesize PSNR  &\footnotesize SSIM &\footnotesize SAD &&\footnotesize PSNR  &\footnotesize SSIM &\footnotesize SAD &&\footnotesize PSNR  &\footnotesize SSIM &\footnotesize SAD &&\footnotesize PSNR  &\footnotesize SSIM &\footnotesize SAD &&\footnotesize PSNR  &\footnotesize SSIM &\footnotesize SAD&&\footnotesize PSNR  &\footnotesize SSIM &\footnotesize SAD&&(s) \\\midrule
\multirow{8}{*}{Pavia}&
 Observed  &            12.19   &            0.020   &            1.355  & &            12.43   &            0.036   &            1.254  & &            12.94   &            0.070   &            1.110  & &            13.52   &            0.108   &            0.993  & &            14.18   &            0.149   &            0.887  &&            14.98   &            0.196   &            0.785  & &   0 \\
\multirow{8}{*}{City}& HaLRTC    &            22.95   &            0.596   &            0.126  & &            27.67   &            0.835   &            0.095  & &            35.68   &            0.970   &            0.048  & &            43.11   &            0.994   &            0.024  & &            51.84   & \color{greed} 0.999   & \color{greed} 0.010  &&            56.13   & \color{greed} 1.000   & \color{greed} 0.006  & &   24 \\
& BCPF      &            29.12   &            0.850   &            0.100  & &            33.00   &            0.927   &            0.077  & &            37.71   &            0.970   &            0.051  & &            39.51   &            0.980   &            0.043  & &            40.12   &            0.982   &            0.041  &&            40.78   &            0.985   &            0.038  & &   3149 \\
& TRLRF     & \color{blue} 33.75   & \color{blue} 0.936   & \color{blue} 0.068  & &            37.03   &            0.967   & \color{greed} 0.051  & &            38.97   &            0.978   &            0.044  & &            40.13   &            0.983   &            0.040  & &            41.10   &            0.986   &            0.036  &&            42.01   &            0.989   &            0.033  & &   1153 \\
& TNN       &            26.08   &            0.739   &            0.147  & &            32.49   &            0.918   &            0.099  & &            38.18   &            0.968   &            0.064  & &            41.89   &            0.982   &            0.048  & &            45.01   &            0.989   &            0.037  &&            48.13   &            0.993   &            0.029  & &   97 \\
& DCTNN     &            29.72   &            0.870   &            0.098  & & \color{greed} 38.26   & \color{blue} 0.978   & \color{blue} 0.042  & & \color{blue} 48.54   & \color{blue} 0.998   & \color{blue} 0.015  & & \color{blue} 54.79   & \color{blue} 0.999   & \color{blue} 0.008  & & \color{blue} 59.20   & \color{blue} 1.000   & \color{blue} 0.005  && \color{blue} 62.97   & \color{blue} 1.000   & \color{blue} 0.004  & &   66 \\
& FTNN      & \color{greed} 33.51   & \color{greed} 0.936   & \color{greed} 0.076  & & \color{blue} 38.60   & \color{greed} 0.974   &            0.053  & & \color{greed} 45.37   & \color{greed} 0.991   & \color{greed} 0.033  & & \color{greed} 49.73   & \color{greed} 0.995   & \color{greed} 0.024  & & \color{greed} 54.99   &            0.997   &            0.017  && \color{greed} 57.73   &            0.998   &            0.013  & &   431 \\
& DTNN  & \color{red} 34.26   & \color{red} 0.953   & \color{red} 0.052  & & \color{red} 40.86   & \color{red} 0.989   & \color{red} 0.026  & & \color{red} 53.20   & \color{red} 0.999   & \color{red} 0.008  & & \color{red} 65.00   & \color{red} 1.000   & \color{red} 0.002  & & \color{red} 67.15   & \color{red} 1.000   & \color{red} 0.002  && \color{red} 77.16   & \color{red} 1.000   & \color{red} 0.001  & &   962 \\
\midrule
\multirow{8}{*}{Washing-}&
Observed  &            12.45   &            0.028   &            1.353  & &            12.68   &            0.053   &            1.254  & &            13.19   &            0.108   &            1.110  & &            13.77   &            0.169   &            0.993  & &            14.44   &            0.234   &            0.887  &&            15.23   &            0.304   &            0.785  & &   0 \\
\multirow{8}{*}{ton DC}& HaLRTC    &            23.24   &            0.713   &            0.208  & &            29.36   &            0.906   &            0.125  & & \color{greed} 38.21   & \color{greed} 0.983   & \color{greed} 0.062  & & \color{greed} 44.76   & \color{blue} 0.996   & \color{blue} 0.034  & & \color{blue} 49.67   & \color{blue} 0.999   & \color{blue} 0.020 & & \color{blue} 53.12   & \color{blue} 0.999   & \color{blue} 0.015  & &   49 \\
& BCPF      &            28.65   &            0.877   &            0.155  & &            32.06   &            0.939   &            0.118  & &            34.75   &            0.964   &            0.093  & &            35.70   &            0.971   &            0.086  & &            35.98   &            0.972   &            0.083  &&            36.03   &            0.973   &            0.083  & &   5566 \\
& TRLRF     & \color{greed} 31.74   & \color{greed} 0.934   & \color{greed} 0.121  & & \color{greed} 33.64   & \color{greed} 0.955   & \color{greed} 0.102  & &            34.98   &            0.966   &            0.091  & &            35.86   &            0.972   &            0.084  & &            36.73   &            0.976   &            0.078  &&            37.69   &            0.981   &            0.071  & &   1931 \\
& TNN       &            22.34   &            0.657   &            0.260  & &            30.19   &            0.915   &            0.142  & &            36.56   &            0.974   &            0.085  & &            40.10   &            0.987   &            0.062  & &            43.03   &            0.992   &            0.047  &&            45.68   &            0.995   &            0.036  & &   159 \\
& DCTNN     &            27.09   &            0.840   &            0.182  & &            32.59   &            0.946   &            0.116  & &            38.16   &            0.982   &            0.072  & &            41.85   &            0.992   &            0.051  & &            44.86   &            0.995   &            0.038  &&            47.50   &            0.997   &            0.029  & &   110 \\
& FTNN      & \color{blue} 32.17   & \color{blue} 0.946   & \color{blue} 0.107  & & \color{blue} 37.03   & \color{blue} 0.979   & \color{blue} 0.075  & & \color{blue} 42.96   & \color{blue} 0.992   & \color{blue} 0.048  & & \color{blue} 46.64   & \color{greed} 0.996   & \color{greed} 0.036  & & \color{greed} 49.46   & \color{greed} 0.997   & \color{greed} 0.028&  & \color{greed} 52.06   & \color{greed} 0.998   & \color{greed} 0.021  & &   805 \\
&     DTNN  & \color{red} 34.21   & \color{red} 0.969   & \color{red} 0.075  & & \color{red} 39.85   & \color{red} 0.992   & \color{red} 0.044  & & \color{red} 45.01   & \color{red} 0.997   & \color{red} 0.033  & & \color{red} 47.56   & \color{red} 0.998   & \color{red} 0.029  & & \color{red} 53.35   & \color{red} 0.999   & \color{red} 0.015  && \color{red} 58.59   & \color{red} 1.000   & \color{red} 0.009  & &   1837 \\
\bottomrule
    \end{tabular}
\label{table-HSI-Pavia}
\end{table*}

Fig. \ref{videoframe} exhibits one frame of the results by different methods on the video data. From the enlarged area, it can be found that our DTNN well restores edges in ``\textit{foreman}'' and ``\textit{highway}'', the hair in ``\textit{carphone}'', and the ship's outline in `\textit{container}''. The homogeneous areas are also protected by our method. We can conclude that the visual effect of our method is the best.
\subsection{Hyperspectral Images}

In this subsection, 2 HSIs, i.e.
a subimage of Pavia City Center dataset\footnote{\scriptsize http://www.ehu.eus/ccwintco/index.php?title=Hyperspectral\_Remote\_Sensing\_Scenes} of the size $200\times200\times80$ (height$\times$width$\times$band),
and a subimage of Washington DC Mall dataset\footnote{\scriptsize https://engineering.purdue.edu/\~biehl/MultiSpec/hyperspectral.html} of the size $256\times256\times{\red 160}$
are adopted as the testing data.
Since the redundancy between HSIs' slices is so high that all the methods perform very well with SR=50\%, we add the case with SR=5\%. Thus, the sampling rates vary from 5\% to 50\%.
Three numerical metrics, consisting of PSNR, SSIM, and the mean Spectral Angle Mapper (SAM) \cite{yuhas1993determination} are selected to quantitatively measure the reconstructed results. Lower values of SAM indicate better reconstructions.

In Tab. \ref{table-HSI-Pavia}, we show the quantitative comparisons of different methods on HSIs. FTNN and TRLRF perform well for the low sampling rate. We can also see that DCTNN and FTNN alternatively achieves the second best place in many cases, showing that DCT and framelet transformation fit the HSI  data better than DFT.
For different metrics, our DTNN obtains the best values in all cases.
As sampling rates arise, the {\red superiority} of our method over compared methods is more evident. For example, when dealing with Pavia City Center, the margins are at least 7.95 dB and 14.19 dB  for PSNR when SR is 40\% and 50\% , respectively.
We attribute this to the fact that,the dictionary could be learned with better ability to express the data  when the sampling rate is high.

We display the  pseudo-color images (using three bands to compose the RGB channels) of the reconstructed HSIs in Fig. \ref{hsiband}. The similarity of the color reflects the fidelity along the spectral direction, which is of vital importance in applications of HSIs. It can be found that the color distortion occurs in the results by TNN. From the enlarged orange and red boxes, we can see that DTNN outperforms compared methods considering the spatial structures and details. 

\begin{figure*}[!t]
\setlength{\tabcolsep}{2pt}
\renewcommand\arraystretch{0.7}
\centering\footnotesize
\begin{tabular}{ccccc ccccc}
Observed & HaLRTC \cite{Liu2013PAMItensor} & BCPF \cite{zhao2015bayesian} & TRLRF \cite{yuan2019tensor}&
 TNN \cite{zhang2017exact} & DCTNN \cite{lu2019low} & FTNN \cite{jiang2019framelet} & DTNN &  Original\\
\includegraphics[width=0.1\linewidth]{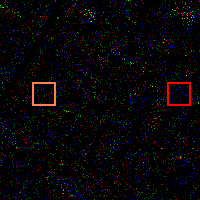} &
\includegraphics[width=0.1\linewidth]{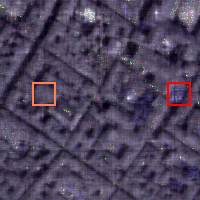} &
\includegraphics[width=0.1\linewidth]{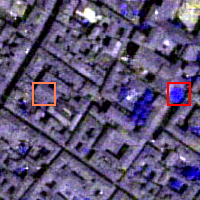} &
\includegraphics[width=0.1\linewidth]{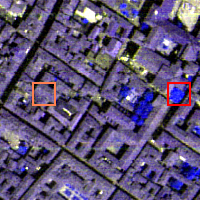} &
\includegraphics[width=0.1\linewidth]{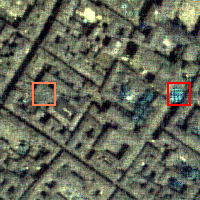} &
\includegraphics[width=0.1\linewidth]{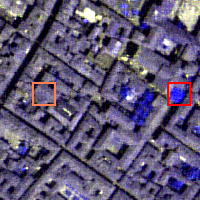} &
\includegraphics[width=0.1\linewidth]{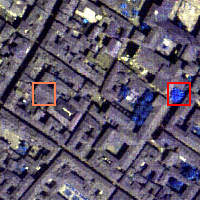} &
\includegraphics[width=0.1\linewidth]{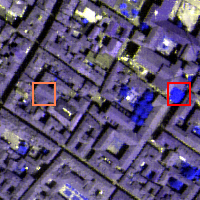} &
\includegraphics[width=0.1\linewidth]{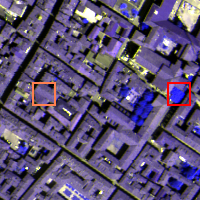} \\

\includegraphics[width=0.08\linewidth]{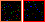} &
\includegraphics[width=0.08\linewidth]{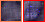} &
\includegraphics[width=0.08\linewidth]{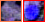} &
\includegraphics[width=0.08\linewidth]{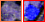} &
\includegraphics[width=0.08\linewidth]{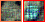} &
\includegraphics[width=0.08\linewidth]{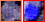} &
\includegraphics[width=0.08\linewidth]{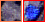} &
\includegraphics[width=0.08\linewidth]{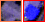} &
\includegraphics[width=0.08\linewidth]{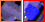} \\\\

\includegraphics[width=0.1\linewidth]{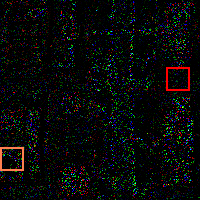} &
\includegraphics[width=0.1\linewidth]{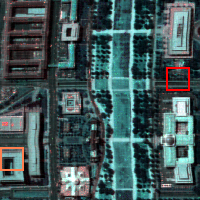} &
\includegraphics[width=0.1\linewidth]{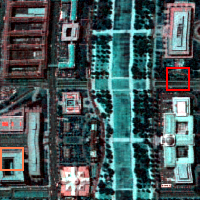} &
\includegraphics[width=0.1\linewidth]{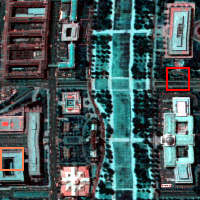} &
\includegraphics[width=0.1\linewidth]{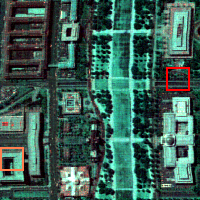} &
\includegraphics[width=0.1\linewidth]{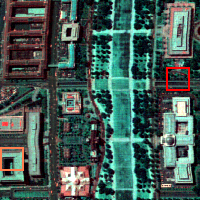} &
\includegraphics[width=0.1\linewidth]{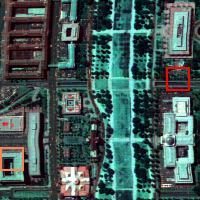} &
\includegraphics[width=0.1\linewidth]{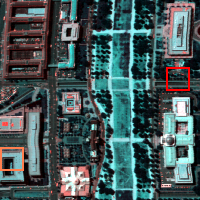} &
\includegraphics[width=0.1\linewidth]{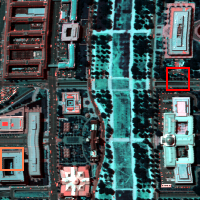} \\

\includegraphics[width=0.08\linewidth]{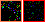} &
\includegraphics[width=0.08\linewidth]{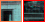} &
\includegraphics[width=0.08\linewidth]{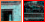} &
\includegraphics[width=0.08\linewidth]{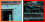} &
\includegraphics[width=0.08\linewidth]{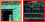} &
\includegraphics[width=0.08\linewidth]{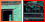} &
\includegraphics[width=0.08\linewidth]{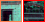} &
\includegraphics[width=0.08\linewidth]{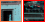} &
\includegraphics[width=0.08\linewidth]{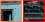} \\\\
\end{tabular}\vspace{-2mm}
\caption{The pseudo-color images and the corresponding enlarged areas of the results by different methods. Top: Pavia City Center (R-4 G-12 B-68) with SR = 5\%. Bottom: Washington DC Mall (R-1   G-113   B-116) with SR = 10\%.}
\label{hsiband}
\end{figure*}

\begin{table*}[!t]
\renewcommand\arraystretch{0.7}\setlength{\tabcolsep}{3pt}
\centering
\caption{RMSE and MAPE of results by different methods with different sampling rates on the \textbf{traffic} data. The {\color{red}best}, the {\color{blue}second best}, and the {\color{greed} third best} values are respectively highlighted by {\color{red}red}, {\color{blue}blue}, and {\color{greed}green} colors.}\vspace{-2mm}
\begin{tabular}{lccccccccccccccccccc}\toprule
SR    & \multicolumn{2}{c}{5\%} && \multicolumn{2}{c}{10\%} && \multicolumn{2}{c}{15\%} && \multicolumn{2}{c}{20\%} && \multicolumn{2}{c}{25\%} && \multicolumn{2}{c}{30\%}&\multirow{2}{*}{Time} \\\cmidrule(r){1-18}

Method & RMSE  & MAPE  && RMSE  & MAPE  && RMSE  & MAPE  && RMSE  & MAPE  && RMSE  & MAPE  && RMSE  & MAPE & (s)\\\midrule
 Observed  &               0.9148   &               95.71 \% & &               0.8942   &               91.44 \% & &               0.8731   &               87.15 \% & &               0.8514   &               82.87 \% & &               0.8291   &               78.56 \% & &               0.8061   &               74.26 \% &   0 \\
 HaLRTC    &               0.3592   &               17.51 \% & &               0.3581   &               16.72 \% & &               0.3575   &               16.26 \% & &               0.3571   &               15.94 \% & &               0.3569   &               15.68 \% & &               0.3566   &               15.47 \% &   13 \\
 BCPF      &               0.3594   &               17.25 \% & &               0.3576   &               16.45 \% & &               0.3571   &               16.13 \% & &               0.3568   &               15.97 \% & &               0.3567   &               15.84 \% & &               0.3566   &               15.76 \% &   218 \\
 TRLRF     &               0.1781   &                9.09 \% & &               0.1753   &                8.62 \% & &               0.1749   &                8.42 \% & &               0.1747   &                8.29 \% & &               0.1745   &                8.20 \% & &               0.1744   &                8.08 \% &   107 \\
 TNN       &               0.0509   &                3.63 \% & &               0.0387   &                2.75 \% & & \color{greed} 0.0336   &                2.33 \% & & \color{greed} 0.0304   & \color{greed}  2.03 \% & & \color{greed} 0.0283   & \color{greed}  1.82 \% & & \color{greed} 0.0267   & \color{greed}  1.65 \% &   27 \\
 DCTNN     & \color{greed} 0.0480   & \color{greed}  3.38 \% & & \color{greed} 0.0387   & \color{greed}  2.71 \% & &               0.0338   & \color{greed}  2.30 \% & &               0.0315   &                2.07 \% & &               0.0296   &                1.88 \% & &               0.0278   &                1.70 \% &   18 \\
 FTNN      & \color{blue}  0.0452   & \color{blue}   3.33 \% & & \color{blue}  0.0358   & \color{blue}   2.31 \% & & \color{blue}  0.0316   & \color{blue}   1.96 \% & & \color{blue}  0.0287   & \color{blue}   1.69 \% & & \color{blue}  0.0265   & \color{blue}   1.49 \% & & \color{blue}  0.0247   & \color{blue}   1.32 \% &   129 \\
     DTNN  & \color{red}   0.0428   & \color{red}    2.76 \% & & \color{red}   0.0354   & \color{red}    2.19 \% & & \color{red}   0.0305   & \color{red}    1.84 \% & & \color{red}   0.0278   & \color{red}    1.60 \% & & \color{red}   0.0256   & \color{red}    1.42 \% & & \color{red}   0.0237   & \color{red}    1.26 \% &   264 \\
\bottomrule
\end{tabular}
\label{table-traffic}
\end{table*}

\begin{figure*}[!t]
\setlength{\tabcolsep}{3pt}
\renewcommand\arraystretch{1}
\centering
\includegraphics[width=0.75\linewidth]{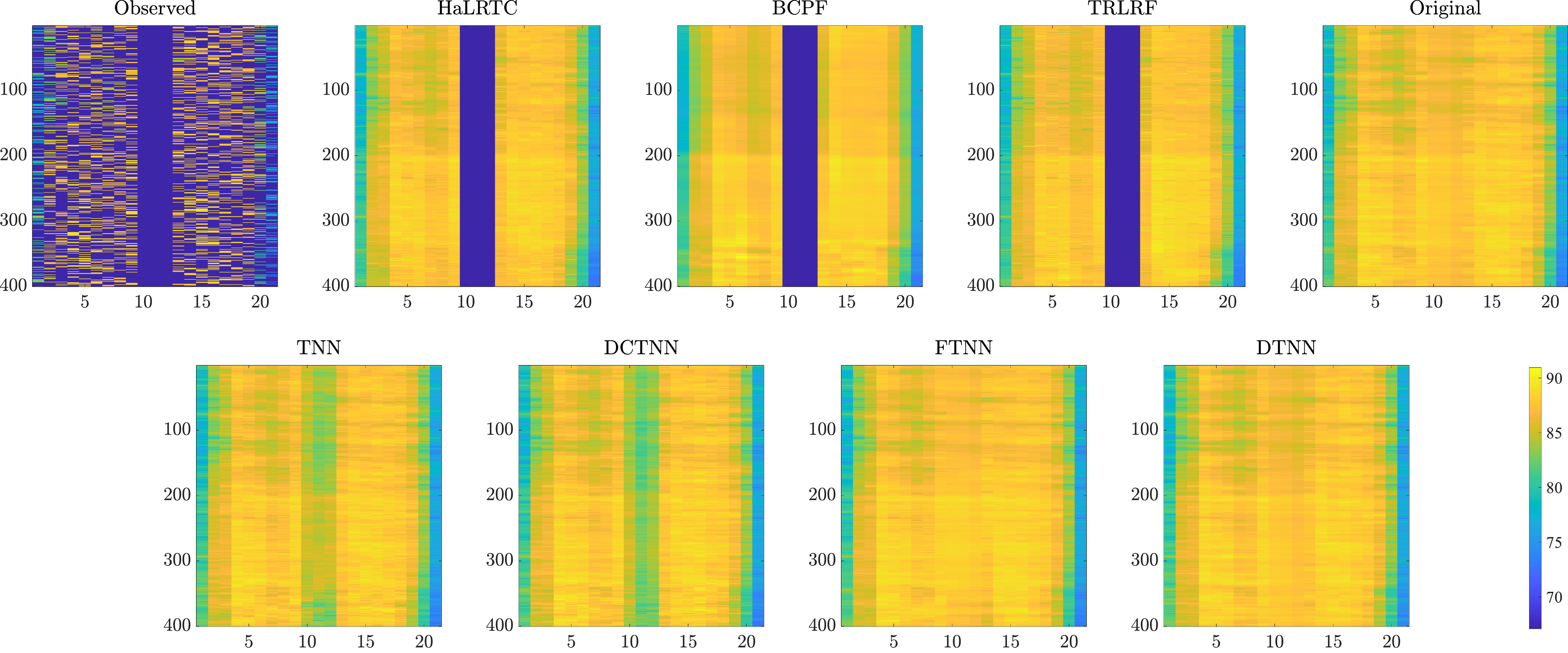}\vspace{-3mm}
\caption{The 88-th lateral slice of the reconstructions by different methods on the \textbf{traffic} data (SR = 30\%).}
\label{fig-traffic}
\end{figure*}

\subsection{Traffic Data}
In this subsection, we test all the methods on the traffic data\footnote{\scriptsize https://gtl.inrialpes.fr/data\_download}, which is provided by Grenoble Traffic Lab (GTL).
A set of traffic speed data of 207 days (April 1, 2015 to October 24, 2015), 1440 time windows\footnote{\scriptsize The sampling period is 1 minute, so there are $60\times24 = 1440$ time windows for each day.}, and 21 detection points, is downloaded and constitutes a third-order tensor of the size ${1440\times 207 \times 21}$. {\red To reduce the time consumption, a subset of the data with the size $400\times 200 \times 21$ corresponding to the first continuous 400 time windows in a day and the first 200 days is manually clipped as the ground truth complete testing data.}
We select the root mean square error (RMSE\footnote{\scriptsize
$
\text{RMSE} = \sqrt{\left(\sum_{ijk}(\mathcal{X}^\text{Rec}_{ijk}-\mathcal{X}_{ijk}^\text{GT})^2\right)/n_1 n_2 n_3}
$})
and the mean absolute percentage error (MAPE\footnote{\scriptsize
$
\text{MAPE} = \frac{1}{n_1n_2n_3}\sum_{ijk}(|\mathcal{X}^\text{Rec}_{ijk}-\mathcal{X}_{ijk}^\text{GT}|/\mathcal{X}_{ijk}^\text{GT})\times 100\%
$. This index is a measure of prediction accuracy, usually expressing accuracy as a percentage.
}) to quantitatively measure the quality of the results. {\red Lower} values of RMSE and MAPE indicate better reconstructions.
After random sampling the elements with $\text{SR}\in\{5\%,10\%,15\%,\cdots,30\%\}$, 3 adjacent frontal slices in a random location are set as unobserved.
This is to simulate the situations in which some detectors are broken.
The 200-th lateral slice of the observation is shown in the top-left of Fig. \ref{fig-traffic}, the missing slices corresponding to the blue columns.

\begin{table*}[!t]\renewcommand\arraystretch{0.7}\setlength{\tabcolsep}{4pt}
\centering
\caption{PSNR, SSIM, and UIQI of results by different methods with different sampling rates on the \textbf{MRI} data. The {\color{red}best}, the {\color{blue}second best}, and the {\color{greed} third best} values are respectively highlighted by {\color{red}red}, {\color{blue}blue}, and {\color{greed}green} colors.}\vspace{-2mm}
\begin{tabular}{lccccccccccccccccccccccc}\toprule
SR    & \multicolumn{3}{c}{10\%} && \multicolumn{3}{c}{20\%} && \multicolumn{3}{c}{30\%} && \multicolumn{3}{c}{40\%} && \multicolumn{3}{c}{50\%} &\multirow{2}{*}{Time} \\\cmidrule(r){1-20}

Method & \footnotesize PSNR  &\footnotesize SSIM &\footnotesize UIQI &&\footnotesize PSNR  &\footnotesize SSIM &\footnotesize UIQI &&\footnotesize PSNR  &\footnotesize SSIM &\footnotesize UIQI &&\footnotesize PSNR  &\footnotesize SSIM &\footnotesize UIQI &&\footnotesize PSNR  &\footnotesize SSIM &\footnotesize UIQI& (s) \\\midrule
 Observed  &            8.09   &            0.043   &            0.020  & &            8.60   &            0.070   &            0.050  & &            9.18   &            0.099   &            0.086  & &            9.85   &            0.132   &            0.127  & &            10.64   &            0.167   &            0.173  &   0 \\
 HaLRTC    &            18.34   &            0.436   &            0.349  & &            22.48   &            0.651   &            0.606  & &            26.01   &            0.794   &            0.756  & &            29.19   &            0.880   &            0.840  & &            32.13   &            0.931   &            0.889  &   11 \\
 TRLRF     & \color{greed} 23.89   &            0.637   &            0.606  & &            25.48   &            0.720   &            0.682  & &            26.36   &            0.760   &            0.722  & &            27.17   &            0.794   &            0.756  & &            28.06   &            0.825   &            0.786  &   921 \\
 BCPF      &            22.63   &            0.574   &            0.518  & &            24.70   &            0.678   &            0.631  & &            25.37   &            0.710   &            0.663  & &            25.55   &            0.720   &            0.671  & &            25.71   &            0.727   &            0.678  &   2430 \\
 TNN       &            22.41   &            0.577   &            0.550  & &            27.12   &            0.789   &            0.757  & &            30.01   &            0.874   &            0.833  & &            32.55   &            0.922   &            0.876  & &            35.01   &            0.953   &            0.905  &   60 \\
 DCTNN     &            23.79   & \color{greed} 0.644   & \color{greed} 0.617  & & \color{greed} 27.63   & \color{greed} 0.808   & \color{greed} 0.773  & & \color{greed} 30.56   & \color{greed} 0.888   & \color{greed} 0.844  & & \color{greed} 33.15   & \color{greed} 0.932   & \color{greed} 0.885  & & \color{greed} 35.62   & \color{greed} 0.960   & \color{greed} 0.911  &   45 \\
FTNN      & \color{blue} 25.15   & \color{blue} 0.743   & \color{blue} 0.695  & & \color{blue} 29.02   & \color{blue} 0.872   & \color{blue} 0.825  & & \color{blue} 31.96   & \color{blue} 0.928   & \color{blue} 0.883  & & \color{blue} 34.49   & \color{blue} 0.958   & \color{blue} 0.916  & & \color{blue} 36.89   & \color{blue} 0.975   & \color{red} 0.937  &   253 \\

 DTNN  & \color{red} 27.19   & \color{red} 0.835   & \color{red} 0.790  & & \color{red} 31.03   & \color{red} 0.917   & \color{red} 0.870  & & \color{red} 33.65   & \color{red} 0.949   & \color{red} 0.902  & & \color{red} 35.83   & \color{red} 0.966   & \color{red} 0.920  & & \color{red} 37.91   & \color{red} 0.978   & \color{blue} 0.934  &   568 \\

\bottomrule
    \end{tabular}%
\label{table-MRI}
\end{table*}
\begin{figure*}[!t]
\setlength{\tabcolsep}{2pt}
\renewcommand\arraystretch{1}
\centering
\begin{tabular}{ccccc ccccc}
\multicolumn{2}{c}{Observed} & \multicolumn{2}{c}{HaLRTC \cite{Liu2013PAMItensor}} & \multicolumn{2}{c}{BCPF \cite{zhao2015bayesian}} & \multicolumn{2}{c}{TRLRF \cite{yuan2019tensor}}& \multicolumn{2}{c}{Original}\\
\multicolumn{2}{c}{\includegraphics[width=0.18\linewidth]{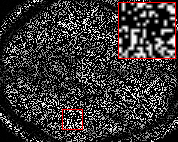}} &
\multicolumn{2}{c}{\includegraphics[width=0.18\linewidth]{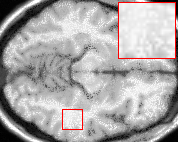}} &
\multicolumn{2}{c}{\includegraphics[width=0.18\linewidth]{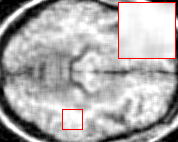}} &
\multicolumn{2}{c}{\includegraphics[width=0.18\linewidth]{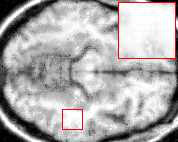}} &
\multicolumn{2}{c}{\includegraphics[width=0.18\linewidth]{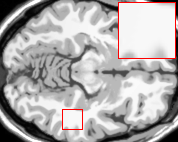}} \vspace{1mm}\\

\hspace{0.09\linewidth} &  \multicolumn{2}{c}{TNN \cite{zhang2017exact}} & \multicolumn{2}{c}{DCTNN \cite{lu2019low}} & \multicolumn{2}{c}{FTNN \cite{jiang2019framelet}} & \multicolumn{2}{c}{DTNN} \\
&
\multicolumn{2}{c}{\includegraphics[width=0.18\linewidth]{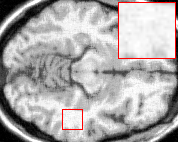}} &
\multicolumn{2}{c}{\includegraphics[width=0.18\linewidth]{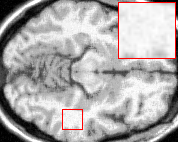}} &
\multicolumn{2}{c}{\includegraphics[width=0.18\linewidth]{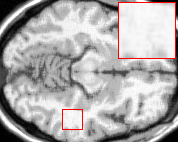}} &
\multicolumn{2}{c}{\includegraphics[width=0.18\linewidth]{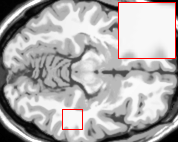}} \\
\end{tabular}\vspace{-1mm}
\caption{The 61-th frontal slice of the results on the \textbf{MRI} data  by different methods (SR = 30\%).}
\label{fig-MRI}
\end{figure*}

Tab. \ref{table-traffic} gives the quantitative metrics of the results by different methods with different sampling rates. We can find that the capabilities of HaLRTC, BCPF, and TRLRF is limited and this phenomenon accord with the visual results shown in Fig. \ref{fig-traffic}. The effectiveness of these three methods is severely affected due to the missing frontal slices. TNN and DCTNN get better metrics while their performance is also not well considering the location of missing slices. FTNN and DTNN recover the rough structure of the missing slices and the metrics of their results also achieve the best and the second best places. The reconstruction of our DTNN in the area of missing slices is closer to the original data than FTNN.
\subsection{MRI Data}
\begin{figure*}[!t]
\setlength{\tabcolsep}{6pt}
\renewcommand\arraystretch{0.8}
\centering
\begin{tabular}{cccc}
\includegraphics[width=0.22\linewidth]{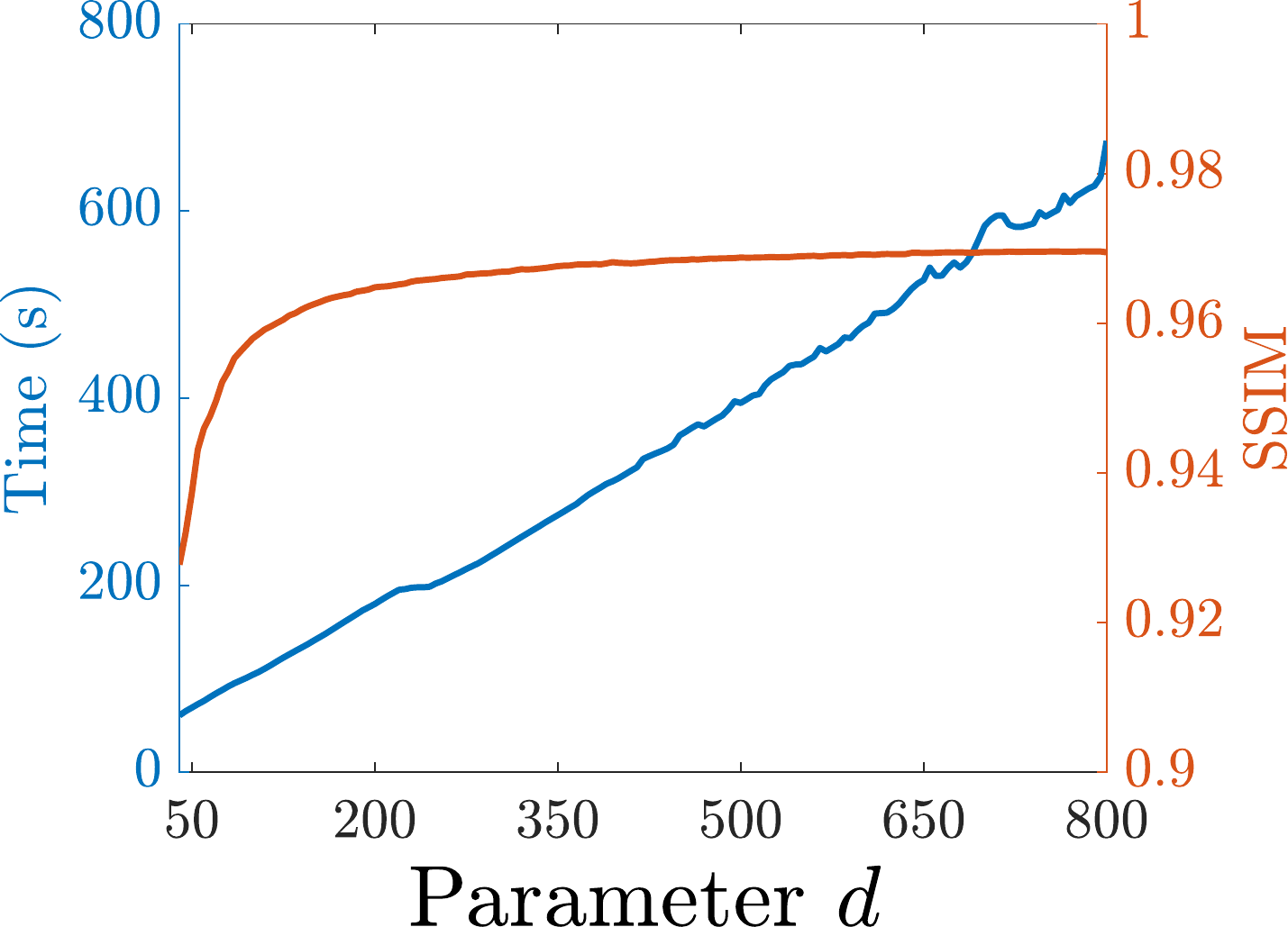} &\includegraphics[width=0.22\linewidth]{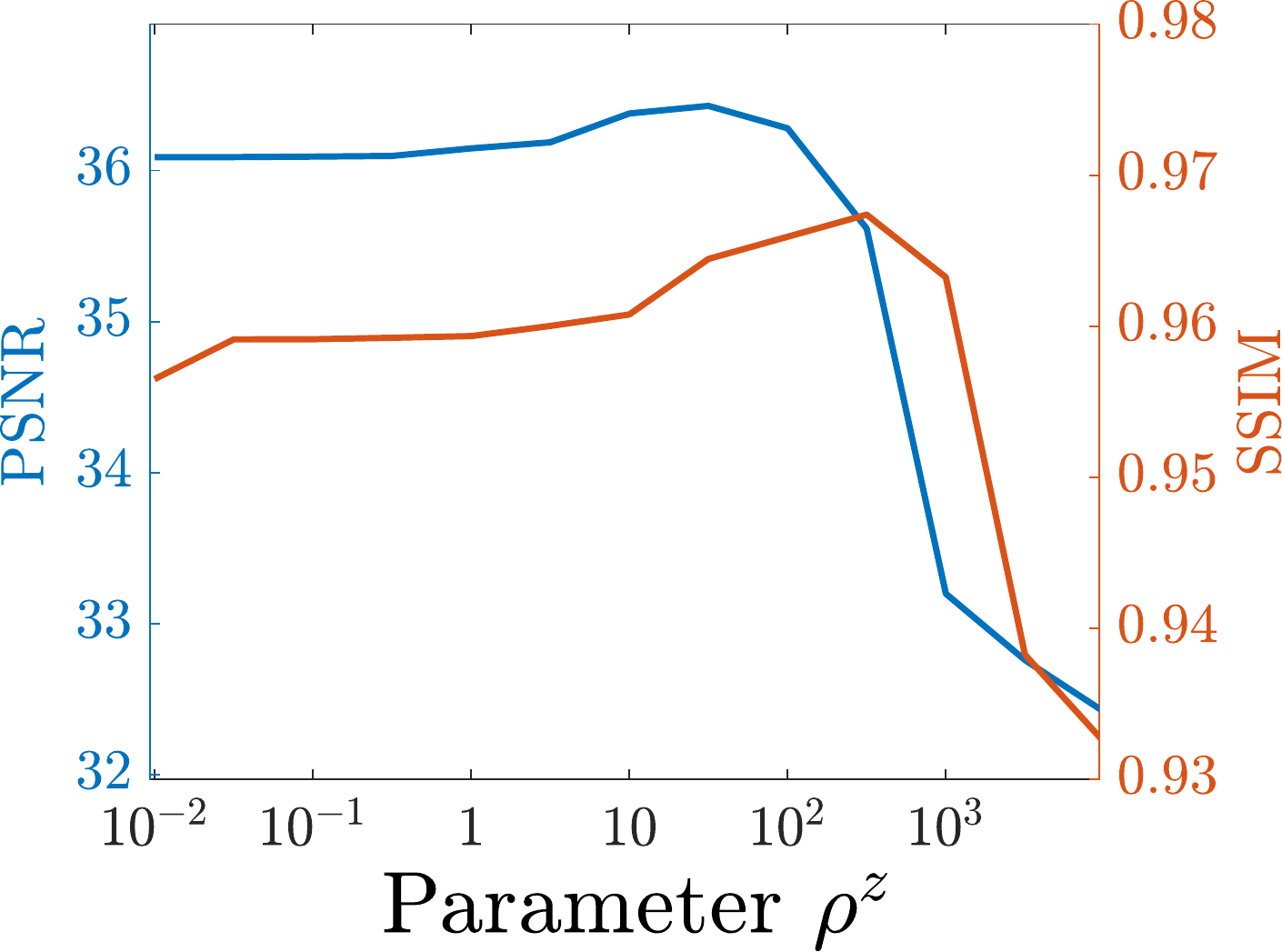} &\includegraphics[width=0.22\linewidth]{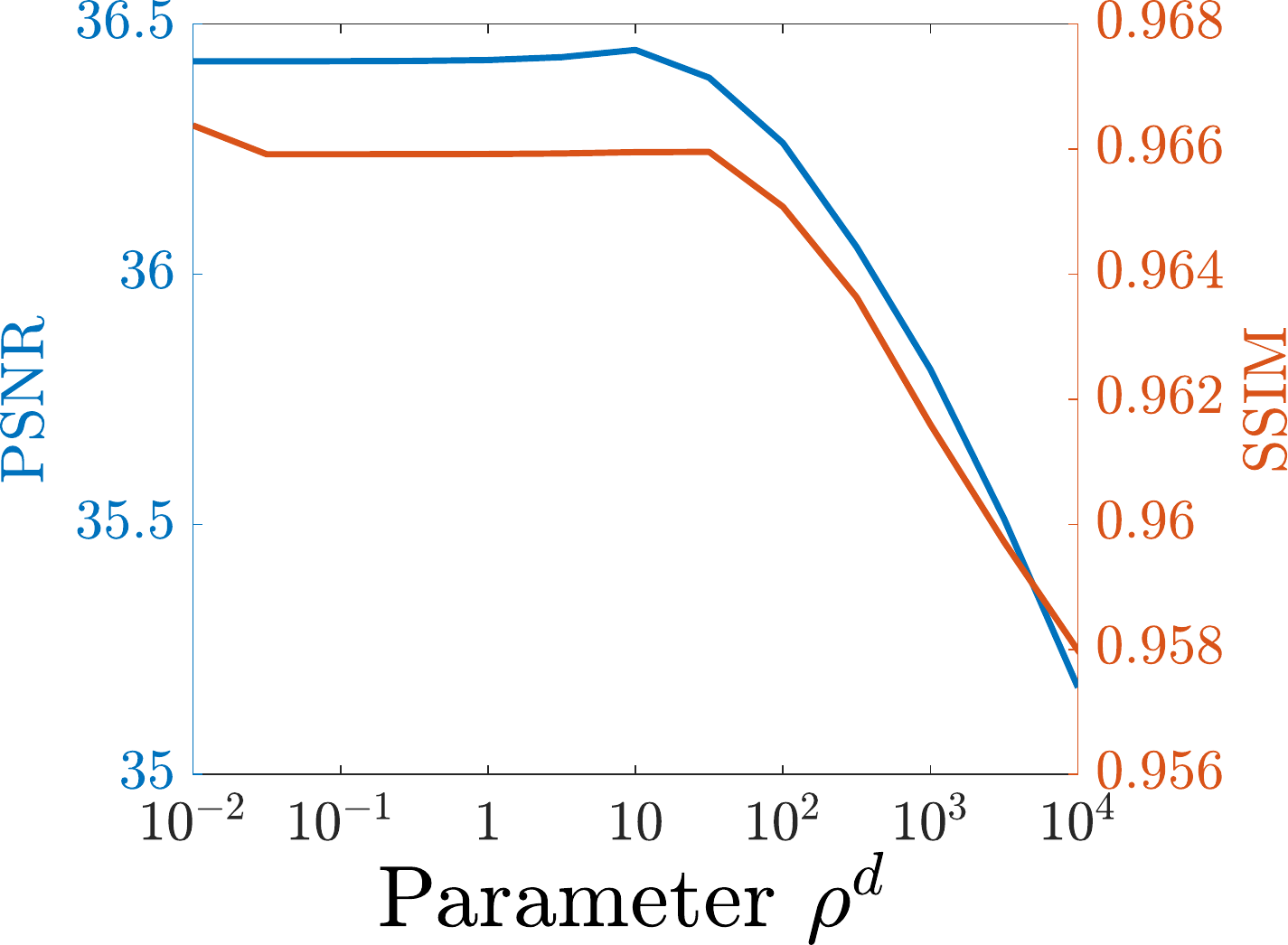}&\includegraphics[width=0.22\linewidth]{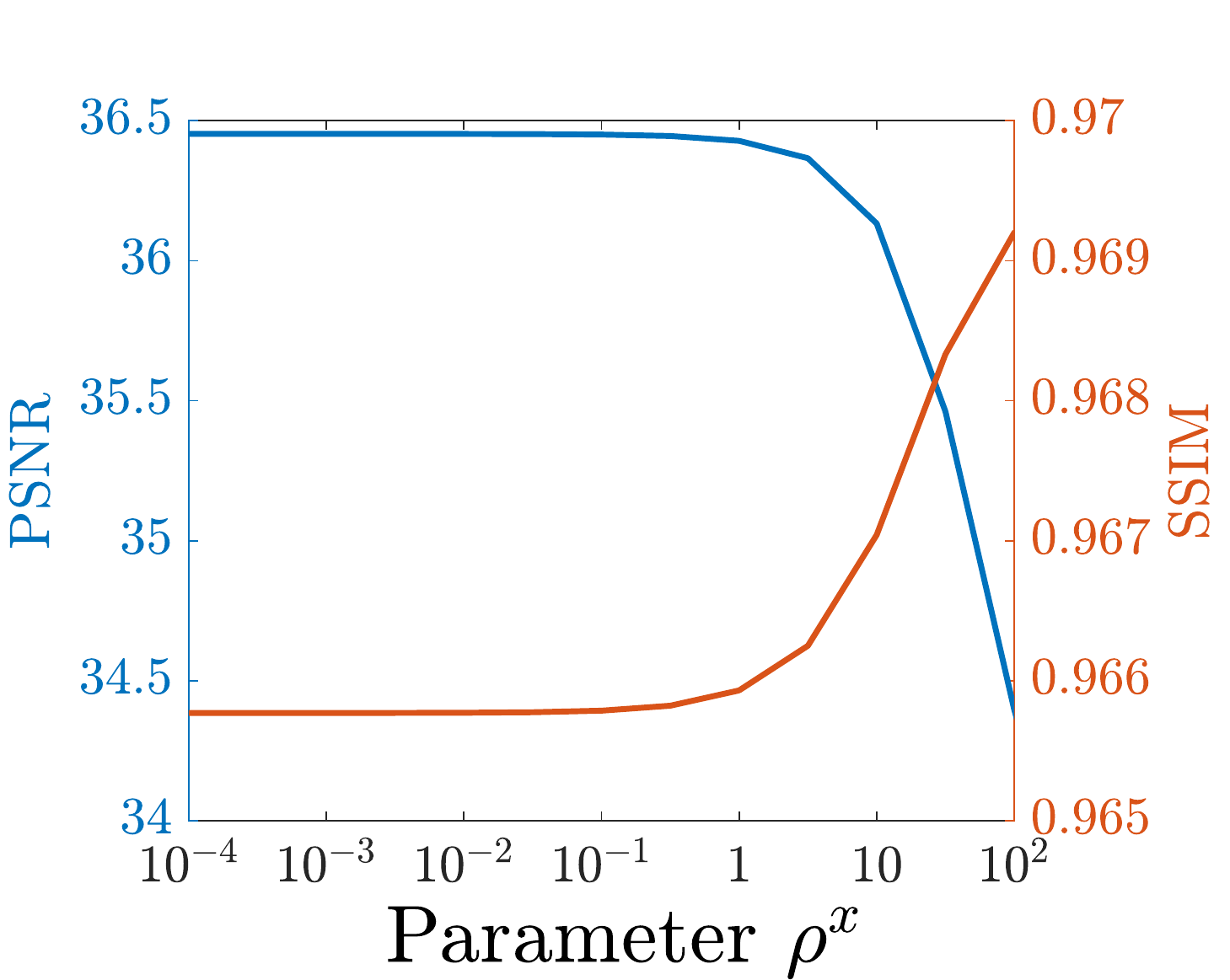}
\end{tabular}
\caption{{\red The running time in seconds and SSIM values with different numbers of dictionary atoms ($d$), and} PSNR and SSIM values of the result by our method with different $\rho^z$, $\rho^d$, and $\rho^x$, on the video ``\textit{foreman}'' (SR = 50\%).}
\label{fig-Para}
\end{figure*}

In this section, all the methods are conducted on the MRI data\footnote{\scriptsize  \url{https://brainweb.bic.mni.mcgill.ca/brainweb/selection_normal.html}} of the size $142\times178\times121$.
This MRI data provides a 3D view of the brain part of a human being. That is, all the modes of this MRI data are corresponding to spatial information.
The sampling rates are set from 10\% to 50\%. Similar to the video data, we compute the mean values of PSNR, SSIM, and UIQI of each frontal slices and report them in Tab. \ref{table-MRI}. From Tab. \ref{table-MRI}, we can find that DTNN outperforms compared methods while DCTNN and FTNN alternatively obtain the second best values. Fig. \ref{fig-MRI} presents the 61-th frontal slice of the results by different methods. For the enlarged white manner area, which is smooth, the results by our DTNN is the cleanest compared with the results by other methods.

\subsection{Discussions}
\begin{table}[!t]
\renewcommand\arraystretch{0.9}\setlength{\tabcolsep}{1.5pt}
\centering\red
\caption{\red PSNR and SSIM values of results by DTNN and traditional dictionary learning method with different sampling rates on the \textbf{video} data ``\textit{foreman}''.}
\begin{tabular}{cccccccccccccccccccccccc}\toprule
SR              & \multicolumn{2}{c}{10\%} & \multicolumn{2}{c}{20\%} & \multicolumn{2}{c}{30\%} & \multicolumn{2}{c}{40\%}& \multicolumn{2}{c}{50\%} \\\cmidrule{1-11}
Method          &  PSNR  & SSIM   &  PSNR  & SSIM   &  PSNR  & SSIM   &  PSNR  & SSIM   &  PSNR  & SSIM\\\midrule
    $\ell_1$    & 23.80     & 0.703  & 25.08  & 0.751  & 29.81  & 0.909  & 31.46  & 0.918  & 33.29  & 0.942  \\
$\ell_{1,2}$    & 23.86     & 0.708  & 25.23  & 0.760  & 30.20  & 0.911  & 31.76  & 0.944  & 33.36  & 0.959  \\
DTNN       & \bf 26.22 & \bf 0.799  & \bf 29.26  & \bf 0.875  & \bf 31.77  & \bf 0.917  & \bf 34.13  & \bf 0.946  & \bf 36.32  & \bf 0.964  \\
\bottomrule
    \end{tabular}%
\label{Tab-Sparse}
\end{table}

\begin{table}[!t]
\red
\renewcommand\arraystretch{0.5}\setlength{\tabcolsep}{3pt}
\centering
\caption{PSNR, SSIM, and UIQI values of results by our method with different initializations on the video data \textit{foreman} ($\text{SR}=50\%$). The {\bf best} and the \underline{second best} values are respectively highlighted by {\bf boldface} and \underline{underline}.}
\begin{tabular}{ccccccc}\toprule
\multicolumn{3}{c}{Method} & & \footnotesize PSNR  &\footnotesize SSIM &\footnotesize UIQI \\\midrule
\multicolumn{3}{c}{Observed}& &            6.51   &            0.047   &            0.056\\
\multicolumn{3}{c}{HaLRTC}& &            31.85   &            0.928   &            0.853\\
\multicolumn{3}{c}{DCTNN}& &  34.07   &  0.934   &  0.854\\\cmidrule{1-7}
\multirow{18}{*}{DTNN}& $\mathcal{X}_0$ & $\mathbf D_0$\\\cmidrule{2-7}
& \multirow{2}{*}{Random} & Random && 14.65  & 0.175 & 0.197\\
&  & Tubes&& 13.29  & 0.133 & 0.153\\\cmidrule{2-7}
& \multirow{2}{*}{Interpolation*} & Random&& 33.52  & 0.939 & 0.862\\
&  & Tubes* && 36.32  & \underline{0.964} & \underline{0.907}\\\cmidrule{2-7}
& \multirow{2}{*}{HaLRTC} & Random&& 32.44  & 0.926 & 0.843\\
&  & Tubes& & \underline{36.50}  & \bf 0.965 & \bf 0.908\\\cmidrule{2-7}
& \multirow{2}{*}{DCTNN} & Random&& 32.74  & 0.930 & 0.849\\
&  & Tubes&& \bf 36.51  &\bf  0.965 & \bf 0.908\\\bottomrule
\multicolumn{6}{l}{*Default setting in our experiments.}
\end{tabular}%
\label{Tab-ini}
\end{table}

{\red
\subsubsection{Comparisons with traditional dictionary learning approaches}\label{Sec-SvsLR}
In this part, we compare our method with traditional dictionary methods.
First, the data $\mathcal{O}\in\mathbb{R}^{n_1\times n_2\times n_3}$ and the coefficient $\mathcal{Z}\in\mathbb{R}^{n_1\times n_2\times d}$ in the tensor format are reshaped into the matrix form via the $\tt{unfold}_3$ operation, i.e., $\mathbf O_{(3)} \in\mathbb{R}^{n_3\times n_1n_2}= \tt{unfold}_3(\mathcal{O})$ and $\mathbf Z_{(3)}\in\mathbb{R}^{d\times n_1n_2} = \tt{unfold}_3(\mathcal{Z})$.
The representation formulary also turns from $\mathcal{O}\approx\mathcal{Z}\times_3\mathbf D$ to $\mathbf O_{(3)}\approx\mathbf D\mathbf Z_{(3)}$.
The tubes of $\mathcal{Z}$  constitute the columns of $\mathbf Z_{(3)}$,  the $i$-th row of $\mathbf Z_{(3)}$ is reshaped from the $i$-th frontal slice of $\mathcal{Z}$.
Then, for the coding coefficient matrix $\mathbf Z_{(3)}$ (also denoted as $\mathbf Z$ for convenience), we regularize it with the common $\|\mathbf Z\|_1= \sum_{ij}|Z_{(3)}|$, which is usually used to enhance the sparsity, and the $\|\mathbf Z\|_{1,2}=\sum_j\sqrt{\sum_iZ_{ij}^2}$, which could exploit the group sparsity of the columns.
For a fair comparison, the algorithm with theoretical guaranteed convergency in \cite{bao2015dictionary} is adopt to optimize these two models.
The video ``\textit{foreman}'' is selected with sampling rates varying from $10\%$ to $50\%$ for testing.
We exhibit the PSNR and SSIM values of the results in Table \ref{Tab-Sparse}.
These two traditional dictionary learning methods are respectively denoted as $\ell_1$ and $\ell_{1,2}$.
We can see from Table \ref{Tab-Sparse} that the $\ell_{1,2}$ constraint is also effective when the sampling rate is bigger than $30\%$ (See the SSIM values). However, when the sampling rate is low, the margin between $\ell_{1,2}$ and our DTNN becomes much larger. Therefore, we can deduce that using the combination of dictionary's atoms (the low-rank constraint) would be helpful to eliminate the deviation caused by inaccurate estimation of the dictionary from incomplete data.
This also supports our statement at the end of Sec. \ref{Sec:Model1} that we need both the learned dictionary and the specific low-rank structure of the coefficients for the accurate completion of the data.}

{\red
\subsubsection{Parameters}\label{Sec:Para}
In our experiments, we find that four parameters mainly affect the performance of our method, i.e., the number of the dictionary atoms $d$ and the proximal parameters $\rho^z$, $\rho^d$, and $\rho^x$. Although $\rho^z$, $\rho^d$, and $\rho^x$ can be finely specified for each iteration, we respectively fix their values across our algorithm to reduce the parameter tuning burden. To test the effects from different values of them, we conduct experiments on the video ``\textit{foreman}'' with setting the sampling rate as $50\%$.

When testing one parameter, other three are fixed as default values. As for $d$, we vary its value from 40 ($0.8n_3$) to 800 ($16n_3$) with a step size $5$. $\rho^z$ and $\rho^d$ are tested with candidates $\{10^{-2},10^{-1.5},\cdots,10^4\}$ while $\rho^x$ varies from $10^4$ to $10^2$.
We illustrate the running time and SSIM values with respect to different values of $d$ and the PSNR and SSIM values with respect to different values of $\rho^z$, $\rho^d$, and $\rho^x$  in Fig. \ref{fig-Para}.
From Fig. \ref{fig-Para}, we can see that as $d$ increases, the performance of our method becomes better while the running time also grows. Our default setting ($d=5n_3=250$) is a compromise between the effectiveness and efficiency. Meanwhile, we can see the performance of our method is more sensitive to $\rho^z$ and our method could obtain satisfactory results with a wide range of $\rho^d$ and $\rho^x$.
}

\begin{figure}[!t]
\setlength{\tabcolsep}{5pt}
\renewcommand\arraystretch{0.6}
\centering
\includegraphics[width=0.90\linewidth]{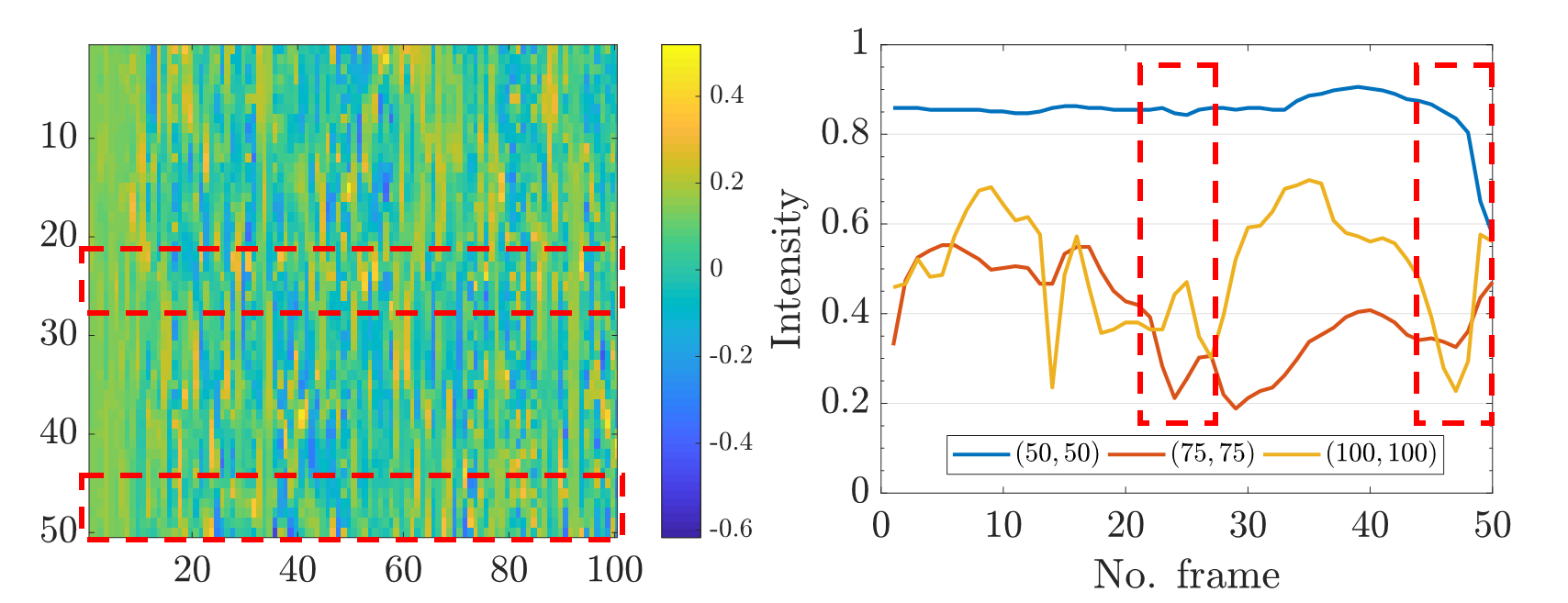}
\includegraphics[width=0.99\linewidth]{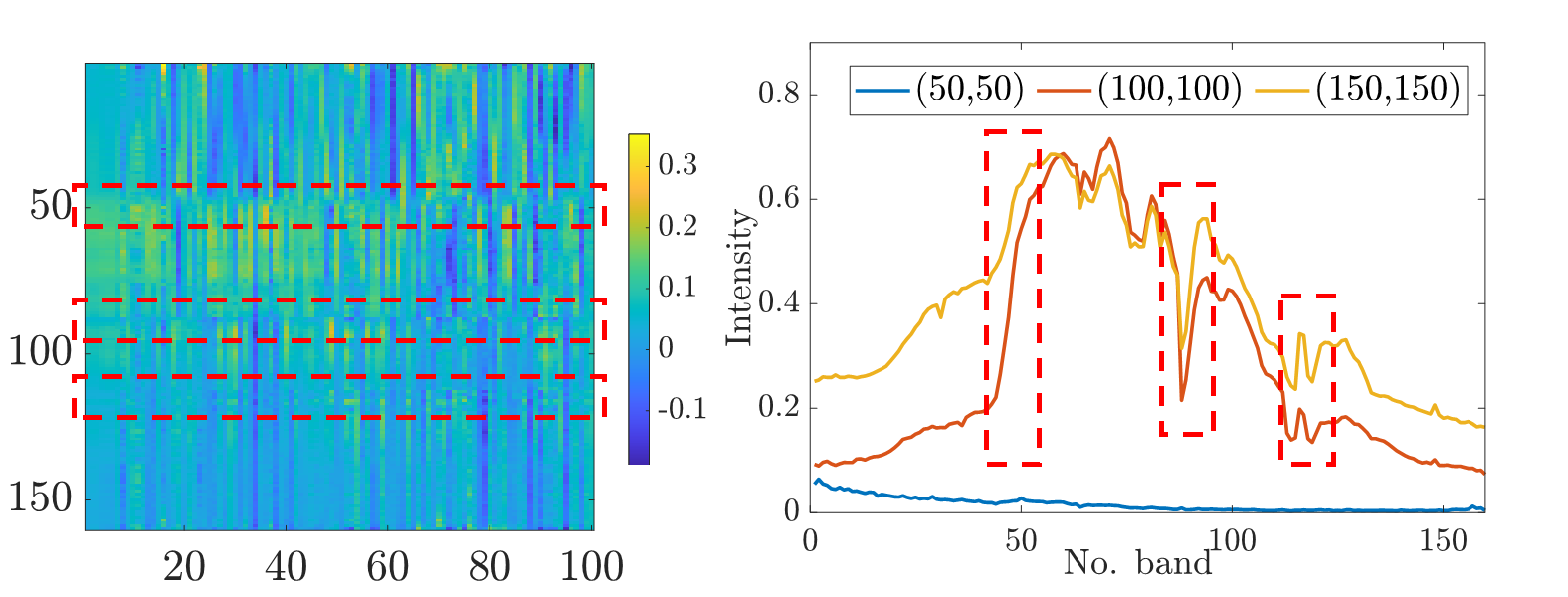}
\caption{\red The learned dictionaries (left) and the tubes of the original data (right). Top: the video ``\textit{foreman}'' of the size $144\times176\times50$ with SR=50\%. Bottom: the HSI Washinton DC Mall of the size $256\times256\times160$ with SR=40\%.}
\label{dict}
\end{figure}

\begin{figure}[!t]
\setlength{\tabcolsep}{5pt}
\renewcommand\arraystretch{0.6}
\centering
\includegraphics[width=0.80\linewidth]{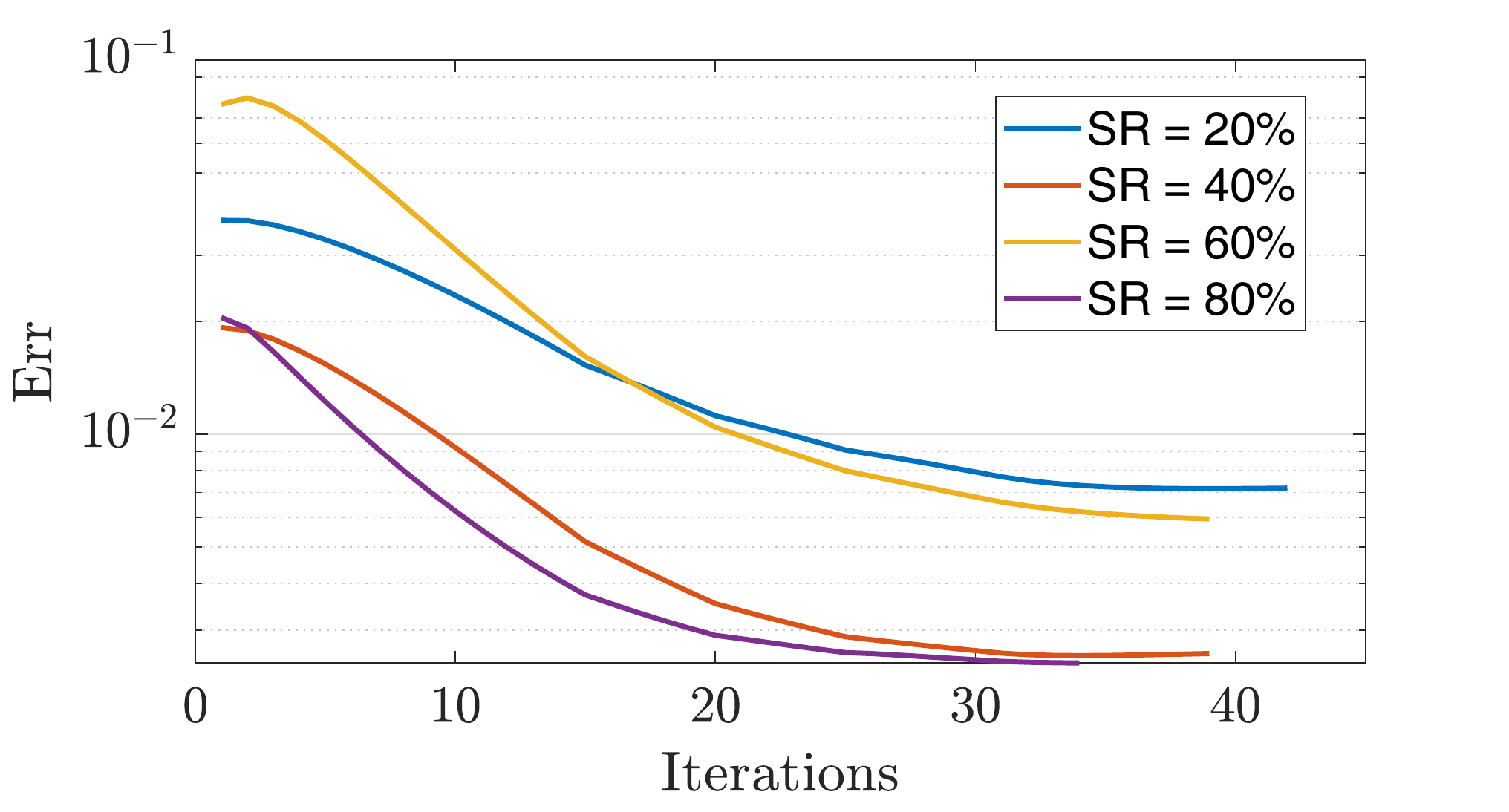}
\caption{\red The estimation error  (Err) of the dictionary with respect to iterations for the synthetic data.}
\label{dict2}
\end{figure}

{\red
\subsubsection{Initializations}\label{2121}
In this part, we test different initialization strategies. Other than the default setting, we employ the random tensor, whose values are uniformly distributed in the interval $[0,1]$, and the results from HaLRTC and DCTNN, which are fast, as initial guesses of $\mathcal{X}_0$. Meanwhile, we also initialize the dictionary using random values following a standard normal distribution. Also, we take the video ``\textit{foreman}'' with sampling rate $50\%$ as an example. The results are shown in Table \ref{Tab-ini}.

From Table \ref{Tab-ini}, we can see that when $\mathcal{X}_0$ is randomly initialized, the performance of our method is poor. When implementing our method with $\mathcal{X}_0$s using the results from HaLRTC and DCTNN, the performances are better than the default setting. This shows that our method indeed relies on the initialization as many nonconvex optimization methods. As for $\mathbf D_0$, our method performs well when using tubes of $\mathcal{X}_0$, as it would contributes to flexibility of $\mathbf D$.}

\subsubsection{Learned dictionaries}
In Fig. \ref{dict}, we exhibit the fist 100 columns of the learned dictionaries together with the plotting of three tubes of the original data. From the red boxes with dashed line, we can see that when the tubes, i.e., the vectors along the third dimension, of the original data fluctuate, the corresponding areas of the dictionaries' atoms (columns) tend not to be smooth. 
This reflects that the dictionaries learned by our method is flexible and adaptive to different types of data.
\begin{figure}[!t]
\setlength{\tabcolsep}{5pt}
\renewcommand\arraystretch{0.6}
\centering
\includegraphics[width=0.99\linewidth]{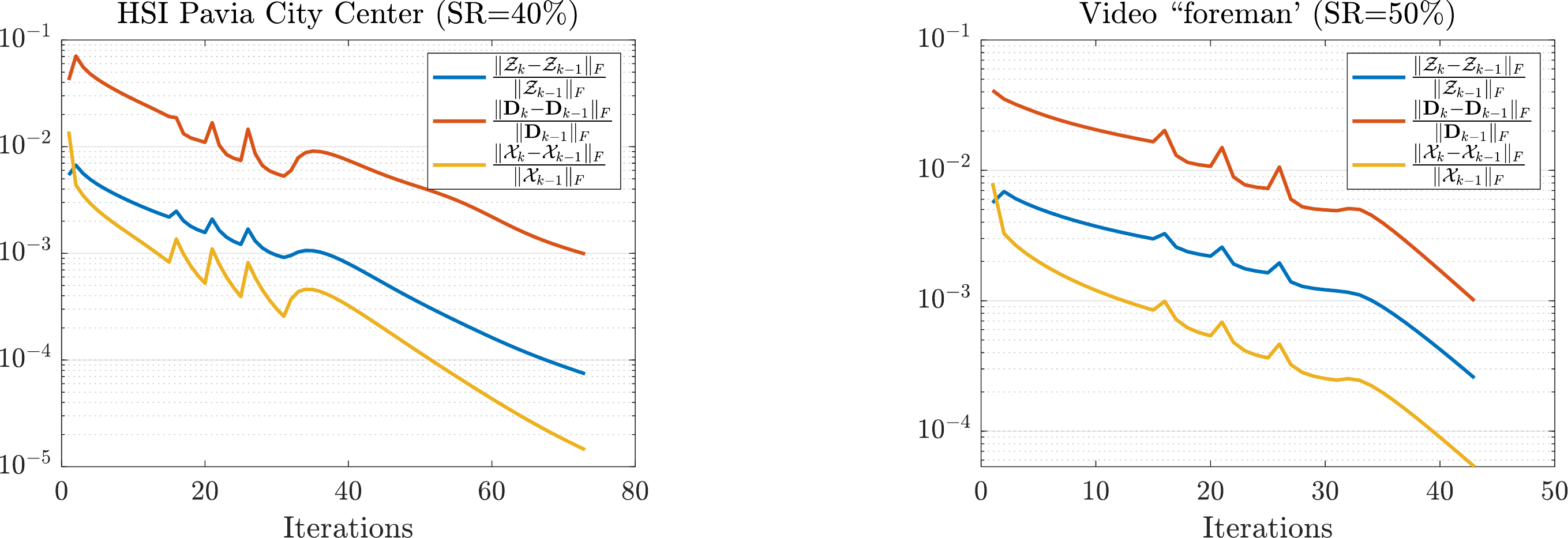}
\caption{The relative changes of the variables. Left: MRI data with SR=30\%. Right: video data ``foreman'' with SR=50\%.}
\label{conver}
\end{figure}
\begin{table}[!t]
\red
\renewcommand\arraystretch{0.7}\setlength{\tabcolsep}{2pt}
\centering
\caption{The computational complexity of each method to deal with a tensor with the size $n_1\times n_2\times n_3$, and the  averaged iterations needed for different types of data.}
\begin{tabular}{ccccccc}\toprule
\multirow{3}{*}{Method}& \multirow{2}{*}{Complexity} & \multicolumn{4}{c}{Iterations} \\\cmidrule{3-6}
& per iteration                                 &Video  & HSI   & MRI   & Traffic\\\midrule
HaLRTC& O$(n_1n_2n_3\sum_{j=1}^3n_j)$           &  59   &97     & 52    &108\\
BCPF & O$(3R^2|\Omega|+R^3)$                    &14     &14     & 12    &15\\
TRLRF & O$(R^2n_1n_2n_3+R^6)$                   &487    &500    & 493   &500\\
TNN  & O$(n_1n_2n_3(\log n_3 +\min(n_1,n_2))$   &86     &77     & 76    &94\\
DCTNN  & O$(n_1n_2n_3(\log n_3 +n_1)$           &97     &91     & 88    &103\\
FTNN & O$(\omega n_1n_2n_3(n_3 +\min(n_1,n_2))$ &86     &79     & 40    &68\\
DTNN & O$(dn_1n_2(dn_3+\min(n_1,n_2)+n_3)$      &61     &72     & 62    &52\\
\bottomrule
\end{tabular}%
\label{Tab-complexiy}
\end{table}
{\red
Meanwhile, we simulate a tensor $\mathcal{X}\in\mathbb{R}^{50\times 50\times 50}=\mathcal{Z}\times_3\mathbf D$, where frontal slices of $\mathcal{Z}\in\mathbb{R}^{50\times 50\times 250}$ is obtained via the multiplication between randomly generated matrices of sizes $50\times 5$ and $5\times 50$, and $\mathbf D\in\mathbb{R}^{50\times 250}$ is randomly generated and normalized with the norm of its columns equaling to 1. Thus, we have the ground-truth of the dictionary and we adopt the estimation error (Err)\footnote{\red The estimation error is defined as
$
\text{Err} = \frac{1}{d}\sum_{i = 1}^d(1-|(\mathbf d^i_\text{Est})^\top\mathbf d^{i_0}_\text{GT} |)
$,
where $\mathbf d^i_\text{Est}$ is the $i$-the atom (column) of the estimated dictionary, $\mathbf d^{i_0}_\text{GT} $ is the $i_0$-th atom of the ground-truth dictionary, and $i_0=\arg\max_{j\in\mathbb{N}^+,1\leq j\leq d,j\neq i} 
|(\mathbf d^i_\text{Est})^\top \mathbf d^{j}_\text{GT} |$.
} of the dictionary \cite{yu2020dictionary}
as a quantitative metric to measure the accuracy of the estimated dictionary. We plot the estimation errors with respect to iteration numbers in the Fig. \ref{dict2}. Although the initial Err values are different owing to the initilization stage, we can see that the Err is becoming smaller as iteration goes on. This shows that our method could enforce the estimated dictionary being close to the ground-truth under different sampling rates.}


\subsubsection{Convergency behaviors}
When the largest relative change of the variables, i.e., $\max\{ \frac{\|\mathcal{Z}_k-\mathcal{Z}_{k-1}\|_F}{\|\mathcal{Z}_{k-1}\|_F}, \frac{\|\mathbf{D}_k-\mathbf{D}_{k-1}\|_F}{\|\mathbf{D}_{k-1}\|_F}, \frac{\|\mathcal{X}_k-\mathcal{X}_{k-1}\|_F}{\|\mathcal{X}_{k-1}\|_F} \}$, is smaller than $10^{-3}$, we consider that our algorithm converges and stop the iterations.
In Fig. \ref{conver}, we present the relative changes with respect to the iterations in our experiments on the HSI data Pavia City Center and the video data ``\textit{foreman}''. Three obvious fluctuations in each curve are is in accord with our parameter setting of enlarging $\rho$ at the 15-th, 20-th, and 25-th iterations. The overall downward trend of the curves in Fig. \ref{conver} illustrates that our method converges quickly.

{\red Moreover, we list the computational complexity of compared methods and the iterations needed for different types of data in Table \ref{Tab-complexiy}. The CP-rank used in BCPF is $R$ and the TR-rank employed in TRLRF is $[R,R,R]$. For FTNN, $\omega$ corresponds to the construction of the framelet system. Although, our method generally needs fewer iterations than other TNN induced methods (TNN, DCTNN, and FTNN), it costs more running time. The main reason is that the computation complexity of our method is high as $d$ is much bigger than $n_3$.
}
\section{Conclusions}\label{Sec-Con}

In this paper, we have introduced the data-adaptive dictionary and low-rank coding for third-order tensor completion. In the completion model, we have proposed to minimize the low-rankness of each tensor slice containing the coding coefficients.
To optimize this model, we design a multi-block proximal alternating minimization algorithm, the sequence generated by which would globally converge to a critical point. Numerical experiments conducted on various types of real-world data show the {\red superiority} of the proposed method.

As a future research work, we will consider how to use the proposed model and idea
to analyze and study
a tensor-based representation learning method for multi-view clustering. Here multi-view data
as a third-order tensor expresses each tensorial data point as a low rank representation of the
learned dictionary basis.

\section*{Acknowledgment}
The authors would like to thank the authors of \cite{Liu2013PAMItensor,yuan2019tensor,zhao2015bayesian,zhang2017exact,tproduct2018lu} for their generous sharing of their codes.
The authors would like to thank the support from Financial Intelligence and Financial Engineering Research Key Laboratory of Sichuan province.

{\footnotesize
\bibliographystyle{ieeetran}
\bibliography{ref}
}
\begin{IEEEbiography}[{\includegraphics[width=1in,height=1.25in,clip,keepaspectratio]{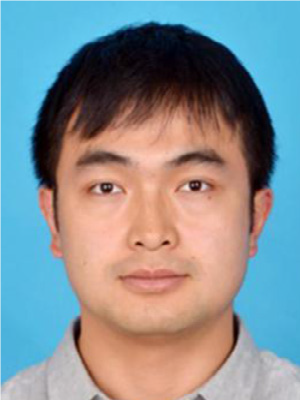}}]{Tai-Xiang Jiang} Tai-Xiang Jiang received the Ph.D. degrees in mathematics from the University of Electronic Science and Technology of China (UESTC), in 2019. He was a co-training Ph.D. student in the University of Lisbon supervised by Prof. Jose M. Bioucas-Dias from 2017 to 2018. He was the research assistant in the Hong Kong Baptist University supported by Prof. Michael K. Ng in 2019. He is currently an Associated Professor with the School of Economic Information Engineering, Southwestern University of Finance and Economics. His research interests include sparse and low-rank modeling and tensor decomposition for multi-dimensional image processing, especially on the low-level inverse problems for multi-dimensional images. https://sites.google.com/view/taixiangjiang/
\end{IEEEbiography}

\begin{IEEEbiography}[{\includegraphics[width=1in,height=1.25in,clip,keepaspectratio]{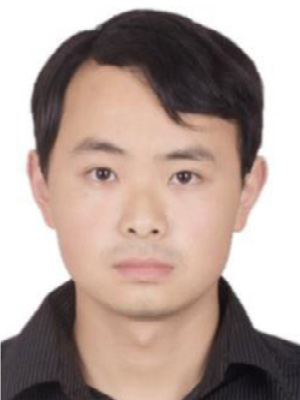}}]{Xi-Le Zhao}
 received the M.S. and Ph.D. degrees from the University of Electronic Science and Technology of China (UESTC), Chengdu, China, in 2009 and 2012.
He is currently a Professor with the School of Mathematical Sciences, UESTC. His main research interests are focused on the models and algorithms of high-dimensional image processing problems.
\end{IEEEbiography}

\begin{IEEEbiography}[{\includegraphics[width=1in,height=1.25in,clip,keepaspectratio]{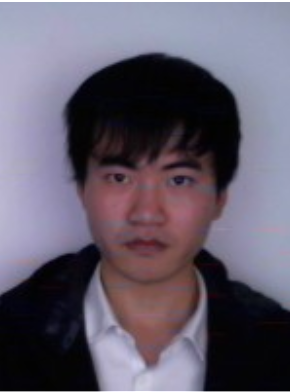}}]{Hao Zhang}
received the B.S. degrees in the college of science from the Sichuan Agricultural University (SAU), Ya’an, China, in 2018. He is currently pursuing the M.S. degree with the School of Mathematical Sciences, University of Electronic Science and Technology of China (UESTC), Chengdu, China. His research interests are modeling and algorithm for high-order data recovery based on the low rank prior of  tensors.
\end{IEEEbiography}

\begin{IEEEbiography}[{\includegraphics[width=1in,height=1.25in,clip,keepaspectratio]{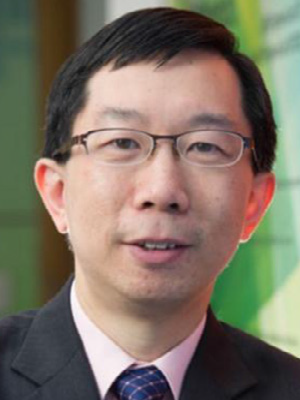}}]{Michael K. Ng}  is the Director of Research Division for Mathematical and Statistical Science, and Chair Professor of Department of Mathematics, the University of Hong Kong,  and Chairman of HKU-TCL Joint Research Center for AI. His research areas are data science, scientific computing, and numerical linear algebra.
\end{IEEEbiography}
\end{document}